\providecommand{\tabularnewline}{\\}
\theoremstyle{plain}
\newtheorem{thm}{\protect\theoremname}
\theoremstyle{plain}
\newtheorem*{thm*}{\protect\theoremname}
\crefname{section}{Sec.}{Secs.}
\Crefname{section}{Section}{Sections}
\Crefname{table}{Table}{Tables}
\crefname{table}{Tab.}{Tabs.}
\providecommand{\theoremname}{Theorem}
\begin{document}
\global\long\def\do{\mathrm{do}}%
\global\long\def\Model{\text{RDM}}%
\global\long\def\argmax#1{\underset{#1}{\text{argmax }}}%
\global\long\def\argmin#1{\underset{#1}{\text{argmin }}}%
\global\long\def\ModelERM{\text{ERM}}%
\global\long\def\Var{\text{\ensuremath{\mathbb{V}}}}%
\global\long\def\Max{\text{max}}%
\global\long\def\Min{\text{min}}%
\global\long\def\MMD{\text{MMD}}%
\global\long\def\ModelVREx{\text{V-REx}}%
\global\long\def\ModelREx{\text{REx}}%
\global\long\def\ColoredMNIST{\text{ColoredMNIST}}%
\global\long\def\MNIST{\text{MNIST}}%
\global\long\def\dist{\text{dist}}%
\global\long\def\CausIRL{\text{CausIRL}}%
\global\long\def\CORAL{\text{CORAL}}%
\global\long\def\MMDAAE{\text{MMD-AAE}}%
\global\long\def\DRO{\text{DRO}}%
\global\long\def\IRM{\text{IRM}}%
\global\long\def\DomainBed{\text{DomainBed}}%

\title{Domain Generalisation via Risk Distribution Matching}
\author{Toan Nguyen, Kien Do, Bao Duong, Thin Nguyen\\
Applied Artificial Intelligence Institute, Deakin University, Australia\\
 \texttt{\small{}$\left\{ \text{s222165627,\,k.do,\,duongng,\,thin.nguyen}\right\} $@deakin.edu.au}}
\maketitle
\begin{abstract}
We propose a novel approach for domain generalisation (DG) leveraging
risk distributions to characterise domains, thereby achieving domain
invariance. In our findings, risk distributions effectively highlight
differences between training domains and reveal their inherent complexities.
In testing, we may observe similar, or potentially intensifying in
magnitude, divergences between risk distributions. Hence, we propose
a compelling proposition: Minimising the divergences between risk
distributions across training domains leads to robust invariance for
DG. The key rationale behind this concept is that a model, trained
on domain-invariant or stable features, may consistently produce similar
risk distributions across various domains. Building upon this idea,
we propose \textbf{\uline{R}}isk \textbf{\uline{D}}istribution
\textbf{\uline{M}}atching ($\Model$). Using the maximum mean discrepancy
(MMD) distance, $\Model$ aims to minimise the variance of risk distributions
across training domains. However, when the number of domains increases,
the direct optimisation of variance leads to linear growth in MMD
computations, resulting in inefficiency. Instead, we propose an approximation
that requires only one MMD computation, by aligning just two distributions:
that of the worst-case domain and the aggregated distribution from
all domains. Notably, this method empirically outperforms optimising
distributional variance while being computationally more efficient.
Unlike conventional DG matching algorithms, $\Model$ stands out for
its enhanced efficacy by concentrating on scalar risk distributions,
sidestepping the pitfalls of high-dimensional challenges seen in feature
or gradient matching. Our extensive experiments on standard benchmark
datasets demonstrate that $\Model$ shows superior generalisation
capability over state-of-the-art DG methods.
\end{abstract}

\section{Introduction\label{sec:Introduction}}

In recent years, deep learning (DL) models have witnessed remarkable
achievements and demonstrated super-human performance on training
distributions~\cite{lecun2015deep}. Nonetheless, this success is
accompanied by a caveat - deep models are vulnerable to distributional
shifts and exhibit catastrophic failures to unseen \emph{out-of-domain}
data~\cite{lu2021invariant,dittadi2020transfer}. Such limitations
hinder the widespread deployment of DL systems in real-world applications,
where \emph{domain difference} can be induced by several factors,
such as spurious correlations~\cite{arjovsky2019invariant} or variations
in location or time~\cite{shankar2021image}.

\begin{figure}
\begin{centering}
\includegraphics[width=1\columnwidth]{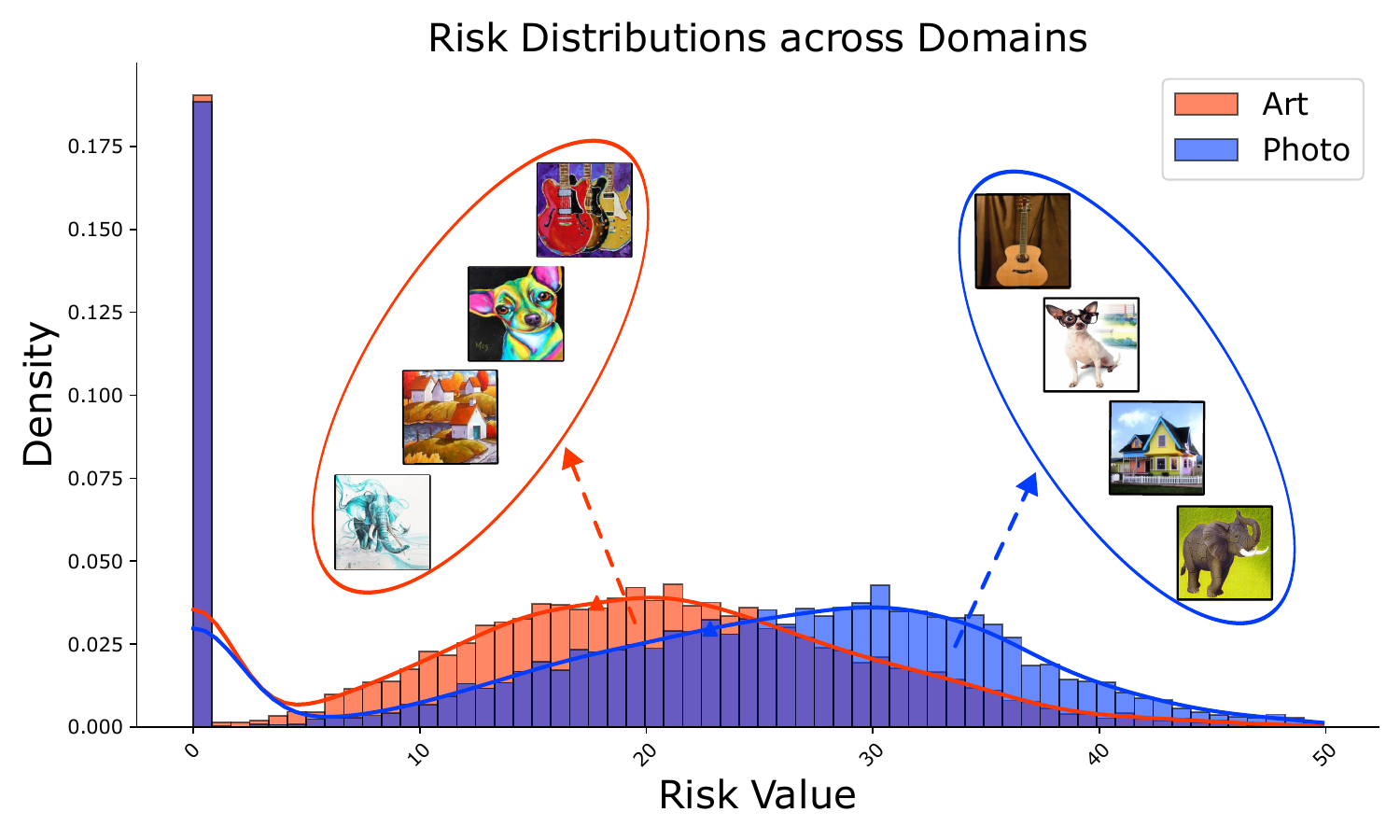}
\par\end{centering}
\caption{Risk distributions derived from training with $\protect\ModelERM$
for the ``Art'' and ``Photo'' domains on the validation set of
PACS dataset. Beyond low-risk samples, which may resemble training
data, the ``Photo'' domain generally exhibits a larger distribution
of risk values\emph{ }compared to ``Art''\emph{, }hinting at an
inherent complexity in learning ``Photo'' samples. The figure indicates
our motivation that \emph{risk distributions} can effectively highlight
differences between domains.\label{fig:teaser-image}}

\end{figure}

In light of these challenges, domain generalisation (DG) aims to produce
models capable of generalising to \emph{unseen} target domains by
leveraging data from diverse sets of training domains or environments~\cite{muandet2013domain}.
An effective approach involves exploring and establishing \emph{domain
invariance}~\cite{li2018domain2}\emph{, }with the expectation that
these invariances will similarly apply to related, yet distinct, test
domains. To this end, prevailing research focuses on characterising
domains through sample representation~\cite{li2018domain,muandet2013domain}.
The objective is to seek for domain-invariant features by aligning
the distributions of hidden representations across various domains.
$\CORAL$~\cite{sun2016deep} trains a non-linear transformation
that can align the second-order statistics of representations across
different layers within deep networks. More, $\CausIRL$~\cite{chevalley2022invariant}
aims to match representation distributions that have been intervened
upon the spurious factors. While these methods show promise, they
can face multiple challenges with the curse of dimensionality~\cite{bellman1966dynamic,indyk1998approximate}.
The sparsity of high-dimensional representation spaces can lead to
unreliable estimates of statistical properties, which in turn affects
the quality of distribution matching techniques. Also, high-dimensional
representations may contain many irrelevant or redundant dimensions,
which can introduce noise to the true underlying similarities or differences
between distributions. As dimensionality rises, computational complexity
intensifies, reducing the efficacy of these methods~\cite{muja2014scalable}.
Such challenges similarly present in DG methods that utilise gradients
for domain alignment~\cite{shi2022gradient,rame2022fishr}.

In this paper, we propose to utilise \emph{scalar risk distributions}
as a means to characterise domains, leading to successfully exploring
and enforcing domain invariance. Our research reveals that risk distributions
can be a reliable indicator of \emph{domain variation} as it effectively
highlights differences between training domains. In Figure~\ref{fig:teaser-image},
we present a visual evidence through histograms, contrasting the risk
distributions between the ``Art'' and ``Photo'' domains on the
validation set of PACS dataset~\cite{li2017deeper}, derived from
training with Empirical Risk Minimisation (ERM)~\cite{vapnik1999overview}.
The ``Photo'' domain generally exhibits a larger distribution of
scalar risks than that of ``Art''. This suggests an inherent complexity
in learning ``Photo'' samples, or possibly due to a more limited
training dataset compared to ``Art''. During the testing phase,
similar divergences between risk distributions may emerge, potentially
intensifying in magnitude. Hence, we propose a compelling proposition:
by \emph{minimising the divergences between risk distributions across
training domains}, we can achieve robust invariance for DG. The underlying
rationale for this concept is that a model, when learning domain-invariant
and stable features, tends to produce consistent risk distributions
across domains.

Building upon this idea, we propose a novel matching approach for
DG, namely \textbf{\emph{\uline{R}}}\emph{isk }\textbf{\emph{\uline{D}}}\emph{istribution
}\textbf{\emph{\uline{M}}}\emph{atching} ($\Model$). $\Model$'s
objective is to minimise the v\emph{ariance of risk distributions}
across all training domains. Inspired by~\cite{muandet2013domain},
we redefine the distributional variance metric to focus specifically
on risk distributions and propose to compute it via the maximum mean
discrepancy ($\MMD$) distance~\cite{gretton2012kernel}. However,
when the number of training domains increases, directly optimising
the variance induces a linear growth in $\MMD$ computations, reducing
efficiency. Instead, we propose an approximation that requires only
\emph{one MMD computation} via aligning just two distributions: that
of the \emph{worst-case }(or worst-performing)\emph{ }domain and the
aggregated distribution from all domains. Empirically, this approach
outperforms optimising distributional variance while significantly
reducing computational complexity. Unlike prevailing matching algorithms,
$\Model$ can address the high-dimensional challenges and further
improve efficacy by exclusively focusing on scalar risk distributions.
Notably, our empirical studies show that $\Model$ even exhibits enhanced
generalisation while being more convenient to optimise. We summarise
our contributions below: 
\begin{itemize}
\item We propose $\Model$, a novel and efficient matching method for DG,
based on our two hypotheses: i) risk distribution disparities offer
insightful cues into domain variation; ii) reducing these divergences
fosters a generalisable and invariant feature-learning predictor. 
\item We re-conceptualise the distributional variance metric to exclusively
focus on risk distributions, with an objective to minimise it. We
further provide an approximate version that aligns only the risk distribution
of the worst-case domain with the aggregate from all domains, improving
both performance and efficiency.
\item Through extensive experiments on standard benchmark datasets, we empirically
show that $\Model$ consistently outperforms state-of-the-art DG methods,
showcasing its remarkable generalisation capability.
\end{itemize}

\section{Related Work\label{sec:Related-Work}}

\paragraph*{Domain Generalisation (DG)}

DG aims to develop models that can generalise well on unseen target
domains by leveraging knowledge from multiple source domains. Typical
DG methods include domain alignment~\cite{muandet2013domain,ben2010theory,li2018domain2},
meta learning~\cite{li2018learning,balaji2018metareg}, data augmentation~\cite{zhou2021mixstyle,zhang2017mixup},
disentangled representation learning~\cite{sun2021recovering,peng2019domain},
robust optimisation~\cite{sagawa2019distributionally,buhlmann2020invariance}
and causality-based methods~\cite{krueger2021out,eastwood2022probable,nguyen2023causal}.
Our proposed method $\Model$ is related to domain alignment, striving
for \emph{domain invariance} to enhance OOD generalisation. Existing
research focuses on characterising domains through sample representations
and aligning their distributions across domains to achieve domain-invariant
features~\cite{albuquerque2019generalizing,muandet2013domain}. $\CORAL$~\cite{sun2016deep}
matches mean and variance of representation distributions, while $\MMDAAE$~\cite{li2018domain}
and FedKA~\cite{sun2023feature} consider matching all moments via
the maxmimum mean discrepancy ($\MMD$) distance~\cite{gretton2012kernel}.
Other methods promote domain invariance by minimising contrastive
loss~\cite{chen2020simple} between representations sharing the same
labels~\cite{mahajan2021domain,motiian2017unified}. Many studies
bypass the representation focus, instead characterising domains via
gradients and achieving invariance by reducing inter-domain gradient
variance~\cite{shi2022gradient,wang2023sharpness,rame2022fishr}.

Despite their potential, aligning these high-dimensional distributions
may be affected by data sparsity, diversity, and high computational
demands~\cite{bellman1966dynamic,indyk1998approximate}. Unlike these
methods, $\Model$ offers enhanced efficacy by focusing on \emph{scalar
risk distributions}, overcoming the high-dimensional challenges. Further,
$\Model$ adopts a novel strategy by efficiently aligning only two
distributions: that of the worst-case domain with the aggregate from
all domains. From our experiments, $\Model$ generally exhibits better
generalisation performance while being more convenient to optimise
compared to competing matching techniques. To the best of our knowledge,
the incarnation of risk distributions for domain matching in $\Model$
is novel and sensible.

\paragraph*{Distribution matching}

Distribution matching has been an important topic with a wide range
of applications in machine learning such as DG~\cite{li2018domain,sun2016deep},
domain adaptation~\cite{wang2018visual,chen2019progressive}, generative
modelling~\cite{li2017mmd,li2015generative}. Early methods, like
the MMD distance~\cite{gretton2012kernel}, leverage kernel-based
approaches to quantify the distance between distributions, laying
the foundation for many subsequent DG techniques~\cite{li2018domain,muandet2013domain}.
Further advancements have explored optimal transport methods, like
the Wasserstein distance~\cite{memoli2011gromov,arjovsky2017wasserstein},
which provides a geometrically intuitive means to compare distributions.
Other metrics, such as the Kullback-Leibler~\cite{kullback1951information}
or Jensen-Shannon~\cite{fuglede2004jensen,endres2003new} divergences,
can serve to measure the divergence between distributions and may
require additional parameters for estimating the density ratio between
the two distributions~\cite{sugiyama2012density}. In this paper,
we utilise the $\MMD$ distance to align risk distributions. Its inherent
advantages include an analytical measure of the divergence between
distributions without relying on distribution densities, and its non-parametric
nature~\cite{gretton2012kernel}. Alternative DG methods augment
data by utilising distribution matching and style transfer to generate
semantic-preserving samples~\cite{zhang2022exact,zhou2021mixstyle}.
Our method differs as we emphasise \emph{domain invariance via aligning
risk distributions}, rather than augmenting representation distributions.

\paragraph*{Invariance and Causality in DG}

Causal methods in DG assume that the causal mechanism of the target
given causal input features is invariant while non-causal features
may change across domains~\cite{arjovsky2019invariant,peters2016causal,krueger2021out}.
Based on this assumption, methods establish domain invariance to recover
the causal mechanism, thereby improving generalisation. ICP~\cite{peters2016causal}
has shown that the causal predictor has an invariant distribution
of residuals in regression models, however, is not suitable for deep
learning. EQRM~\cite{eastwood2022probable} and REx~\cite{krueger2021out}
leverage the invariance in the \emph{average} risks over samples across
domains. In contrast to above methods, we consider matching \emph{entire
risk distributions }over samples\emph{ }across domains\emph{, }which,
as our experiments demonstrate, is more powerful and enhances generalisation
capability.

\section{Preliminaries\label{sec:Preliminaries}}

Domain generalisation (DG) involves training a classifier $f$ on
data composed of multiple training domains (also called environments)
so that $f$ can perform well on unseen domains at test time. Mathematically,
let $\mathcal{D}=\left\{ \mathcal{D}_{1},...,\mathcal{D}_{m}\right\} $
denote the training set consisting of $m$ different domains/environments,
and let $\mathcal{D}_{e}:=\left\{ \left(x_{e}^{i},y_{e}^{i}\right)\right\} _{i=1}^{n_{e}}$
denote the training data belonging to domain $e$ ($1\leq e\leq m$).
Given a loss function $\ell$, the risk of a particular domain sample
$(x_{e}^{i},y_{e}^{i})$ is denoted by $R_{e}^{i}:=\ell\left(f\left(x_{e}^{i}\right),y_{e}^{i}\right)$,
and the expected risk $\overline{R}_{e}$ of domain $e$ is defined
as: 
\begin{align}
\overline{R}_{e} & :=\mathbb{E}_{\left(x_{e},y_{e}\right)\sim\mathcal{D}_{e}}\left[\ell\left(f\left(x_{e}\right),y_{e}\right)\right]=\mathbb{E}_{\mathcal{D}_{e}}\left[R_{e}\right]\label{eq:domain_expected_risk}
\end{align}

A common approach to train $f$ is Empirical Risk Minimisation (ERM)~\cite{vapnik1999overview}
which minimises the expected risks across all training domains. Its
loss function, denoted by $\mathcal{L}_{\text{ERM}}$, is computed
as follows: 
\begin{align}
\mathcal{L}_{\text{\ensuremath{\ModelERM}}} & =\mathbb{E}_{e\sim\mathcal{E}}\mathbb{E}_{\left(x_{e},y_{e}\right)\sim\mathcal{D}_{e}}\left[\ell\left(f\left(x_{e}\right),y_{e}\right)\right]\label{eq:erm_loss_1}\\
 & =\mathbb{E}_{e\sim\mathcal{E}}\left[\overline{R}_{e}\right]\label{eq:erm_loss_2}
\end{align}
where $\mathcal{E}:=\{1,...,m\}$ denotes the set of all domains.

\section{Risk Distribution Matching\label{sec:Methodology}}

A model $f$ trained via ERM often struggles with generalisation to
new test domains. This is because it tends to capture domain-specific
features~\cite{arjovsky2019invariant,nguyen2023causal}, such as
domain styles, to achieve low risks in training domains, rather than
focusing on domain-invariant or semantic features. To overcome this
issue, we present a novel training objective that bolsters generalisation
through \emph{domain} \emph{invariance}. Our goal requires utilising
a unique domain representative that both characterises each domain
and provides valuable insights into domain variation. Specifically,
we propose to leverage the \emph{distribution of risks over all samples
within a domain} (or shortly \emph{risk distribution}) as this representative.
Unlike other domain representatives, like latent representation or
gradient distributions~\cite{li2018domain,shi2022gradient}, the
risk distribution sidesteps high-dimensional challenges like data
sparsity and high computational demands~\cite{bellman1966dynamic,muja2014scalable}.
In essence, a model capturing stable, domain-invariant features may
consistently yield similar risk distributions across all domains.
In pursuit of invariant models, we propose \textbf{\emph{\uline{R}}}\emph{isk
}\textbf{\emph{\uline{D}}}\emph{istribution }\textbf{\emph{\uline{M}}}\emph{atching}
($\Model$), a novel approach for DG that reduces the divergences
between training risk distributions via \emph{minimising the distributional
variance across them}. 

Let $\mathcal{T}_{e}$ be the probability distribution over the risks
of all samples in domain $e$ (i.e., $\left\{ R_{e}^{i}\right\} _{i=1}^{n_{e}}$).
We refer to $\mathcal{T}_{e}$ as the \emph{risk distribution} of
domain $e$, the representative that effectively captures the core
characteristics of the domain. We denote $\mathbb{V}_{\mathbb{R}}\left(\left\{ \mathcal{T}_{1},...,\mathcal{T}_{m}\right\} \right)$
the distributional variance across the risk distributions $\left\{ \mathcal{T}_{1},...,\mathcal{T}_{m}\right\} $
in the real number space. We achieve our objective by minimising the
following loss function:

\begin{equation}
\mathcal{L}_{\text{final}}:=\ \mathcal{L}_{\text{ERM}}+\lambda\mathbb{V}_{\mathbb{R}}\left(\left\{ \mathcal{T}_{1},...,\mathcal{T}_{m}\right\} \right)\label{eq:general_variance_rdm_loss_1}
\end{equation}
where $\left(\lambda\geq0\right)$ is a coefficient balancing between
reducing the total training risks with enforcing invariance across
domains. $\lambda$ is set to 1 unless specified otherwise.

To compute $\mathbb{V}_{\mathbb{R}}\left(\left\{ \mathcal{T}_{1},...,\mathcal{T}_{m}\right\} \right)$,
we require a suitable representation for the \emph{implicit} risk
distribution $\mathcal{T}_{e}$ of domain $e$. Leveraging kernel
mean embedding~\cite{smola2007hilbert}, we express $\mathcal{T}_{e}$
as its embedding, $\mu_{\mathcal{T}_{e}}$, within a reproducing kernel
Hilbert space (RKHS) $\mathcal{H}$ using a feature map $\phi:\mathbb{R}\rightarrow\mathcal{H}$
below: 

\begin{align}
\mu_{\mathcal{T}_{e}} & \coloneqq\ \mathbb{E}_{R_{e}\sim\mathcal{T}_{e}}\left[\phi\left(R_{e}\right)\right]\label{eq:mean_embed}\\
 & =\ \mathbb{E}_{R_{e}\sim\mathcal{T}_{e}}\left[k\left(R_{e},\cdot\right)\right]\label{eq:mean_embed_eq2}
\end{align}
where a kernel function $k\left(\cdot,\cdot\right):\mathbb{R}\times\mathbb{R}\rightarrow\mathbb{R}$
is introduced to bypass the explicit specification of $\phi$. Assuming
the condition $\left(\mathbb{E}_{R_{e}\sim\mathcal{T}_{e}}\left(k\left(R_{e},R_{e}\right)\right)<\infty\right)$
holds, the mean map $\mu_{\mathcal{T}_{e}}$ remains an element of
$\mathcal{H}$~\cite{gretton2012kernel,li2018domain}. It is noteworthy
that for a \emph{characteristic} kernel $k$, the representation $\mu_{\mathcal{T}_{e}}$
within $\mathcal{H}$ is unique~\cite{muandet2013domain,gretton2012kernel}.
Consequently, two distinct risk distributions $\mathcal{T}_{u}$ and
$\mathcal{T}_{v}$ for any domains $u,v$ respectively have different
kernel mean embeddings in $\mathcal{H}$. In this work, we use the
RBF kernel, a well-known characteristic kernel defined as $k\left(x,x'\right):=\text{exp\ensuremath{\left(-\frac{1}{2\sigma}\left\Vert x-x'\right\Vert ^{2}\right)}}$,
where $\sigma>0$ is the bandwidth parameter.

With the unique representation of $\mathcal{T}_{e}$ established,
our objective becomes computing the distributional variance between
risk distributions within $\mathcal{H}$, represented by $\mathbb{V}_{\mathbb{\mathcal{H}}}\left(\left\{ \mathcal{T}_{1},...,\mathcal{T}_{m}\right\} \right)$.
Inspired by~\cite{muandet2013domain}, we redefine the variance metric
to focus specifically on risk distributions across multiple domains
below:

\begin{equation}
\mathbb{V}_{\mathcal{H}}\left(\left\{ \mathcal{T}_{1},...,\mathcal{T}_{m}\right\} \right):=\ \frac{1}{m}\sum_{e=1}^{m}\left\Vert \mu_{\mathcal{T}_{e}}-\mu_{\mathcal{T}}\right\Vert _{\mathcal{H}}^{2}\label{eq:dist_variance_1}
\end{equation}
where $\mathcal{T}=\frac{1}{m}\sum_{e=1}^{m}\mathcal{T}_{e}$ denotes
the probability distribution over the risks of all samples in the
entire training set, or equivalently, the set of all $m$ domains.
Meanwhile, $\mu_{\mathcal{T}_{e}}$ and $\mu_{\mathcal{T}}$ represent
the mean embedings of $\mathcal{T}_{e}$ and $\mathcal{T}$, respectively,
and are computed as in Eq.~\ref{eq:mean_embed}. Incorporating $\mathbb{V}_{\mathcal{H}}\left(\left\{ \mathcal{T}_{1},...,\mathcal{T}_{m}\right\} \right)$
into our loss function from Eq.~\ref{eq:general_variance_rdm_loss_1},
we get:

\begin{align}
\mathcal{L}_{\text{final}} & :=\ \mathcal{L}_{\text{ERM}}+\lambda\mathbb{V}_{\mathbb{\mathcal{H}}}\left(\left\{ \mathcal{T}_{1},...,\mathcal{T}_{m}\right\} \right)\label{eq:general_variance_rdm_loss_2}
\end{align}
Minimising $\mathbb{V}_{\mathbb{\mathcal{H}}}\left(\left\{ \mathcal{T}_{1},...,\mathcal{T}_{m}\right\} \right)$
in Eq.~\ref{eq:general_variance_rdm_loss_2} facilitates our objective
of equalising risk distributions across all domains, as proven by
the theorem below. 
\begin{thm}
~\cite{muandet2013domain} Given the distributional variance $\mathbb{V}_{\mathbb{\mathcal{H}}}\left(\left\{ \mathcal{T}_{1},...,\mathcal{T}_{m}\right\} \right)$
is calculated with a characteristic kernel $k$, $\mathbb{V}_{\mathcal{H}}\left(\left\{ \mathcal{T}_{1},...,\mathcal{T}_{m}\right\} \right)=0$
if and only if $\mathcal{T}_{1}=...=\mathcal{T}_{m}\left(=\mathcal{T}\right)$.
\end{thm}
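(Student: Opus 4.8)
\section*{Proof proposal}

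The plan is to prove both directions of the biconditional, relying on two facts. The first is that the kernel mean-embedding map is \emph{linear}, so the embedding of the pooled distribution $\mathcal{T}=\frac{1}{m}\sum_{e=1}^{m}\mathcal{T}_{e}$ equals the average of the individual embeddings. The second is that, because $k$ is \emph{characteristic}, the map $\mathcal{T}_{e}\mapsto\mu_{\mathcal{T}_{e}}$ is injective, so two risk distributions coincide exactly when their embeddings in $\mathcal{H}$ coincide. This injectivity is precisely the property already noted in the paragraph preceding the statement, and it is the one non-elementary ingredient of the argument; everything else reduces to linearity of expectation and non-negativity of norms.

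First I would record the linearity identity. Using the definition of the mean embedding in Eq.~\ref{eq:mean_embed} together with linearity of expectation over the mixture $\mathcal{T}$, one obtains
\begin{equation}
\mu_{\mathcal{T}}=\mathbb{E}_{R\sim\mathcal{T}}\left[\phi(R)\right]=\frac{1}{m}\sum_{e=1}^{m}\mathbb{E}_{R_{e}\sim\mathcal{T}_{e}}\left[\phi(R_{e})\right]=\frac{1}{m}\sum_{e=1}^{m}\mu_{\mathcal{T}_{e}}.
\end{equation}
For the $(\Leftarrow)$ direction, assume $\mathcal{T}_{1}=\dots=\mathcal{T}_{m}=\mathcal{T}$. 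Then $\mu_{\mathcal{T}_{e}}=\mu_{\mathcal{T}}$ for every $e$, so each summand $\left\Vert \mu_{\mathcal{T}_{e}}-\mu_{\mathcal{T}}\right\Vert _{\mathcal{H}}^{2}$ vanishes, giving $\mathbb{V}_{\mathcal{H}}\left(\left\{ \mathcal{T}_{1},\dots,\mathcal{T}_{m}\right\} \right)=0$.

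For the $(\Rightarrow)$ direction, I would exploit that $\mathbb{V}_{\mathcal{H}}$ is an average of squared Hilbert-space norms, each of which is non-negative. Hence $\mathbb{V}_{\mathcal{H}}=0$ forces every term to be zero, i.e. $\left\Vert \mu_{\mathcal{T}_{e}}-\mu_{\mathcal{T}}\right\Vert _{\mathcal{H}}^{2}=0$, so $\mu_{\mathcal{T}_{e}}=\mu_{\mathcal{T}}$ for all $e$, and in particular all the embeddings are equal to one another. Invoking injectivity of the embedding for the characteristic kernel $k$ then upgrades each equality $\mu_{\mathcal{T}_{e}}=\mu_{\mathcal{T}}$ back to the distributional equality $\mathcal{T}_{e}=\mathcal{T}$, whence $\mathcal{T}_{1}=\dots=\mathcal{T}_{m}$.

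The argument is largely routine, and I do not expect a genuine obstacle so much as a single step that must be stated with care: the passage from equality of embeddings in $\mathcal{H}$ to equality of the underlying distributions. This is exactly where the \emph{characteristic} assumption on $k$ is indispensable, since without it the embedding could collapse distinct distributions to the same point and the $(\Rightarrow)$ direction would fail. I would therefore make the injectivity property explicit and cite it as the key lemma, treating the non-negativity and linearity observations as immediate.
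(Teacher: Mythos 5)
Your proposal is correct and takes essentially the same route as the paper's own proof: both directions rest on non-negativity of the squared norms plus the injectivity of the kernel mean embedding for a characteristic kernel (the paper states this as the metric property $\left\Vert \mu_{\mathcal{T}_{e}}-\mu_{\mathcal{T}}\right\Vert _{\mathcal{H}}^{2}=0$ iff $\mathcal{T}_{e}=\mathcal{T}$, citing Sriperumbudur et al.), and on the linearity identity $\mu_{\mathcal{T}}=\frac{1}{m}\sum_{e=1}^{m}\mu_{\mathcal{T}_{e}}$ for the converse. The only cosmetic difference is that you state the linearity of the mixture embedding explicitly up front, whereas the paper invokes it inline in the converse direction.
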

\begin{proof}
Please refer to our appendix for the proof.
\end{proof}
In the next part, we present how to compute the distributional variance
using the Maximum Mean Discrepancy (MMD) distance~\cite{gretton2012kernel},
relying only on risk samples. Then, we propose an efficient approximation
of optimising the distributional variance, yielding improved empirical
performance.

\subsection{Maximum Mean Discrepancy}

For domain $e$, the squared norm, $\left\Vert \mu_{\mathcal{T}_{e}}-\mu_{\mathcal{T}}\right\Vert _{\mathcal{H}}^{2}$,
defined in Eq.~\ref{eq:dist_variance_1}, is identified as the squared
$\MMD$ distance~\cite{gretton2006kernel} between distributions
$\mathcal{T}_{e}$ and $\mathcal{T}$. It is expressed as follows:

\begin{align}
\MMD^{2}\left(\mathcal{T}_{e},\mathcal{T}\right)=\  & \left\Vert \mu_{\mathcal{T}_{e}}-\mu_{\mathcal{T}}\right\Vert _{\mathcal{H}}^{2}\\
=\  & \left\Vert \mathbb{E}_{R_{e}\sim\mathcal{T}_{e}}\left[\phi\left(R_{e}\right)\right]-\mathbb{E}_{R_{f}\sim\mathcal{T}}\left[\phi\left(R_{f}\right)\right]\right\Vert _{\mathcal{H}}^{2}\\
=\  & \mathbb{E}_{R_{e},R_{e}^{'}\sim\mathcal{T}_{e}}\left\langle \phi\left(R_{e}\right),\phi\left(R_{e}^{'}\right)\right\rangle \nonumber \\
 & -2\mathbb{E}_{R_{e}\sim\mathcal{T}_{e};R_{f}\sim\mathcal{T}}\left\langle \phi\left(R_{e}\right),\phi\left(R_{f}\right)\right\rangle \label{eq:MMD_3}\\
 & +\mathbb{E}_{R_{f},R_{f}^{'}\sim\mathcal{T}}\left\langle \phi\left(R_{f}\right),\phi\left(R_{f}^{'}\right)\right\rangle \nonumber 
\end{align}
where $\left\langle \cdot,\cdot\right\rangle $ denote the inner product
operation in $\mathcal{H}.$ Through the kernel trick, we can compute
these inner products via the kernel function $k$ without an explicit
form of $\phi$ below:

\begin{align}
\MMD^{2}\left(\mathcal{T}_{e},\mathcal{T}\right)=\  & \mathbb{E}_{R_{e},R_{e}^{'}\sim\mathcal{T}_{e}}k\left(R_{e},R_{e}^{'}\right)\nonumber \\
 & -2\mathbb{E}_{R_{e}\sim\mathcal{T}_{e};R_{f}\sim\mathcal{T}}k\left(R_{e},R_{f}\right)\label{eq:MMD_4}\\
 & +\mathbb{E}_{R_{f},R_{f}^{'}\sim\mathcal{T}}k\left(\mathcal{R}_{f},\mathcal{R}_{f}^{'}\right)\nonumber 
\end{align}

We reformulate our loss function in Eq.~\ref{eq:general_variance_rdm_loss_2}
to incorporate MMD as follows:

\begin{align}
\mathcal{L}_{\text{final}} & :=\ \mathcal{L}_{\text{ERM}}+\frac{\lambda}{m}\sum_{e=1}^{m}\MMD^{2}\left(\mathcal{T}_{e},\mathcal{T}\right)\label{eq:general_variance_rdm_mmd_loss_1}\\
 & =\ \mathcal{L}_{\text{ERM}}+\lambda\mathcal{L_{\Model}}\label{eq:general_variance_rdm_mmd_loss_2}
\end{align}
The loss function $\mathcal{L_{\Model}}$ involves minimising $\MMD^{2}\left(\mathcal{T}_{e},\mathcal{T}\right)$
for every domain $e$. Ideally, the distributional variance reaches
its lowest value at $0$ if $\MMD\left(\mathcal{T}_{e},\mathcal{T}\right)=0$,
equivalent to $\left(\mathcal{T}_{e}=\mathcal{T}\right)$~\cite{gretton2012kernel,gretton2006kernel},
across $e$ domains. The objective also entails aligning each individual
risk distribution, $\mathcal{T}_{e}$, with the aggregated distribution
spanning all domains, $\mathcal{T}$. With the characteristic RBF
kernel, it can be viewed as \emph{matching an infinite number of moments}
across all risk distributions. 

We emphasise our choice of $\MMD$ owing to its benefits for effective
risk distribution matching: i) $\MMD$ is an important member of the
Integral Probability Metric family~\cite{muller1997integral} that
offers an analytical solution facilitated through RKHS, and ii) $\MMD$
enjoys the property of quantifying the dissimilarity between two implicit
distributions via their finite samples in a non-parametric manner. 

\subsection{Further improvement of $\protect\Model$}

We find that effective alignment of risk distributions across $m$
domains can be achieved by matching the risk distribution of the \emph{worst-case
}domain, denoted as $w$, with the combined risk distribution of all
domains, offering an approximation to the optimisation of risk distributional
variance seen in Eq.~\ref{eq:general_variance_rdm_mmd_loss_1}. This
approximate version significantly reduces the $\MMD$ distances computation
in $\mathcal{L}_{\text{RDM}}$ from $O\left(m\right)$ to $O\left(1\right)$,
and further improves generalisation, as we demonstrate with empirical
evidence in Section~\ref{sec:Experiments}. 

Denote by $\left(w=\argmax{e\in\mathcal{E}}\overline{R}_{e}\right)$
the worst-case domain, i.e., the domain that has the largest expected
risk in $\mathcal{E}$. The approximate $\Model$'s loss,$\hat{\mathcal{L}}_{\text{RDM}}$,
is computed as follows: 
\begin{align}
\mathcal{\hat{L}}_{\Model} & =\MMD^{2}\left(\mathcal{T}_{w},\mathcal{T}\right)\label{eq:improved_rdm_1}\\
 & \approx\mathcal{L}_{\Model}
\end{align}

In our experiments, we observed only a small gap between $\mathcal{\hat{L}}_{\Model}$
and $\mathcal{L}_{\Model}$, while optimising $\mathcal{\hat{L}}_{\Model}$
proving to be more computationally efficient. The key insight emerges
from $\overline{R}_{e}$, the first moment (or mean) of $\mathcal{T}_{e}$.
Often, the average risk can serve as a measure of domain uniqueness
or divergence~\cite{krueger2021out,rame2022fishr}. Specifically,
a domain with notably distinct mean risk is more likely to diverge
greatly from other risk distributions. Under such circumstances, $\mathcal{\hat{L}}_{\Model}$
will be an upper-bound of $\mathcal{L_{\Model}}$, as shown by: $\mathcal{\mathcal{\mathcal{L}_{\Model}}}=\frac{1}{m}\sum_{e=1}^{m}\MMD^{2}\left(\mathcal{T}_{e},\mathcal{T}\right)\leq\frac{1}{m}\sum_{e=1}^{m}\MMD^{2}\left(\mathcal{T}_{w},\mathcal{T}\right)=\MMD^{2}\left(\mathcal{T}_{w},\mathcal{T}\right)=\mathcal{\hat{L}}_{\Model}$.
By optimising $\mathcal{\hat{L}}_{\Model}$, we can also potentially
decrease $\mathcal{L}_{\Model}$, thus aligning risk distributions
across domains effectively. More, $\mathcal{\hat{L}}_{\Model}$ drives
the model to prioritise the worst-case domain's optimisation. This
approach enhances the model's robustness to extreme training scenarios,
which further improves generalisation as proven in~\cite{sagawa2019distributionally,krueger2021out}.
These insights shed light on the superior performance of optimising
$\mathcal{\hat{L}}_{\Model}$ over $\mathcal{L}_{\Model}$. Therefore,
we opted to use $\mathcal{\hat{L}}_{\Model}$, simplifying our model's
training and further bolstering its OOD performance.

\section{Experiments\label{sec:Experiments}}

We evaluate and analyse $\Model$ using a synthetic $\ColoredMNIST$
dataset~\cite{arjovsky2019invariant} and multiple benchmarks from
the DomainBed suite~\cite{gulrajani2021in}. Each of our claims is
backed by empirical evidence in this section. Our source code to reproduce
results is available at:~\url{https://github.com/nktoan/risk-distribution-matching}

\subsection{Synthetic Dataset: ColoredMNIST}

We evaluate all baselines on a synthetic binary classification task,
namely $\ColoredMNIST$~\cite{arjovsky2019invariant}. This dataset
involves categorising digits (0-9) into two labels: ``zero'' for
0 to 4 range and ``one'' for 5 to 9 range, with each digit colored
either red or green. The dataset is designed to assess the generalisation
and robustness of baseline models against the influence of spurious
color features. The dataset contains two training domains, where the
chance of red digits being classified as ``zero'' is $80$\% and
$90$\%, respectively, while this probability decreases to only $10$\%
during testing. The goal is to train a predictor invariant to ``digit
color'' features, capturing only ``digit shape'' features.

Following~\cite{eastwood2022probable}, we employ a two-hidden-layer
MLP with 390 hidden units for all baselines. Optimised through the
Adam optimiser~\cite{kingma2015adam} at a learning rate of $0.0001$,
with a dropout rate of $0.2$, we train each algorithm for $600$
iterations with a batch size of 25,000. We repeat the experiment ten
times over different values of the penalty weight $\lambda$. We find
our matching penalty quite small, yielding optimal $\Model$'s performance
within the $\lambda$ range of $[1000,10000]$. We provide more details
about experimental settings in the supplementary material.

\begin{table}
\begin{centering}
\begin{tabular}{ccc}
\toprule 
\multirow{2}{*}{Algorithm} & \multicolumn{2}{c}{Initialisation}\tabularnewline
\cmidrule{2-3} \cmidrule{3-3} 
 & Rand.  & ERM\tabularnewline
\midrule
\midrule 
ERM  & 27.9$\pm$1.5  & 27.9$\pm$1.5\tabularnewline
GroupDRO  & 27.3$\pm$0.9  & 29.0$\pm$1.1\tabularnewline
IGA  & 50.7$\pm$1.4  & 57.7$\pm$3.3\tabularnewline
IRM  & 52.5$\pm$2.4  & 69.7$\pm$0.9\tabularnewline
VREx  & 55.2$\pm$4.0  & 71.6$\pm$0.5\tabularnewline
EQRM  & 53.4$\pm$1.7  & 71.4$\pm$0.4\tabularnewline
CORAL  & 55.3$\pm$2.8  & 65.6$\pm$1.1\tabularnewline
MMD  & 54.6$\pm$3.2  & 66.4$\pm$1.7\tabularnewline
\midrule 
$\Model$ (\emph{ours})  & \textbf{56.3$\pm$1.5}  & \textbf{72.4$\pm$1.0}\tabularnewline
\midrule 
Oracle  & \multicolumn{2}{c}{72.1$\pm$0.7}\tabularnewline
Optimum  & \multicolumn{2}{c}{75.0}\tabularnewline
\bottomrule
\end{tabular}
\par\end{centering}
\caption{$\protect\ColoredMNIST$ test accuracy where the best results are
marked as bold. Results of other methods are referenced from~\cite{eastwood2022probable}.
\label{tab:-CMNIST_result_mainpaper}}
\end{table}

\begin{figure*}[t]
\subfloat[ERM's histogram]{\begin{centering}
\includegraphics[width=1\columnwidth]{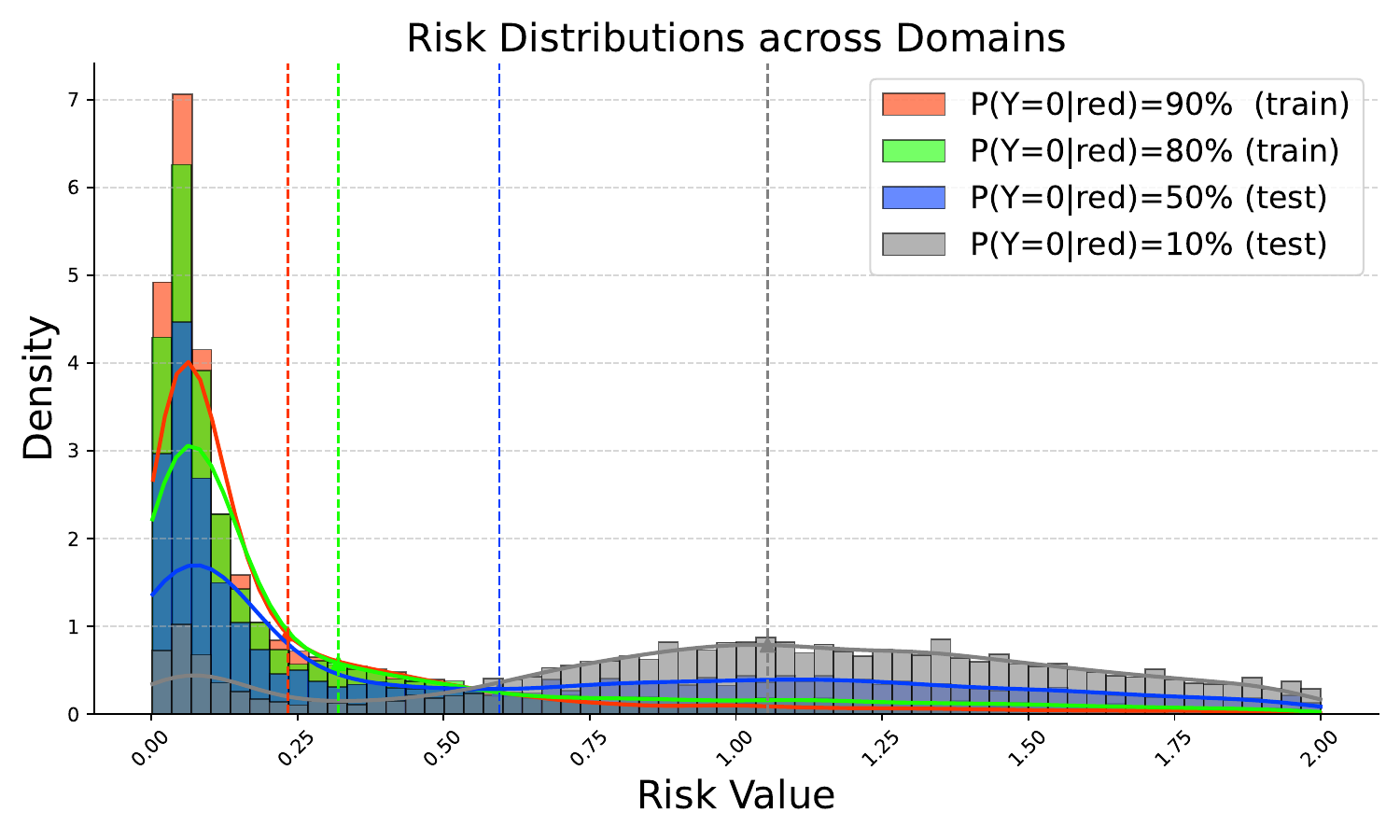} 
\par\end{centering}

}\hfill{}\subfloat[$\protect\Model$'s histogram]{\begin{centering}
\includegraphics[width=1\columnwidth]{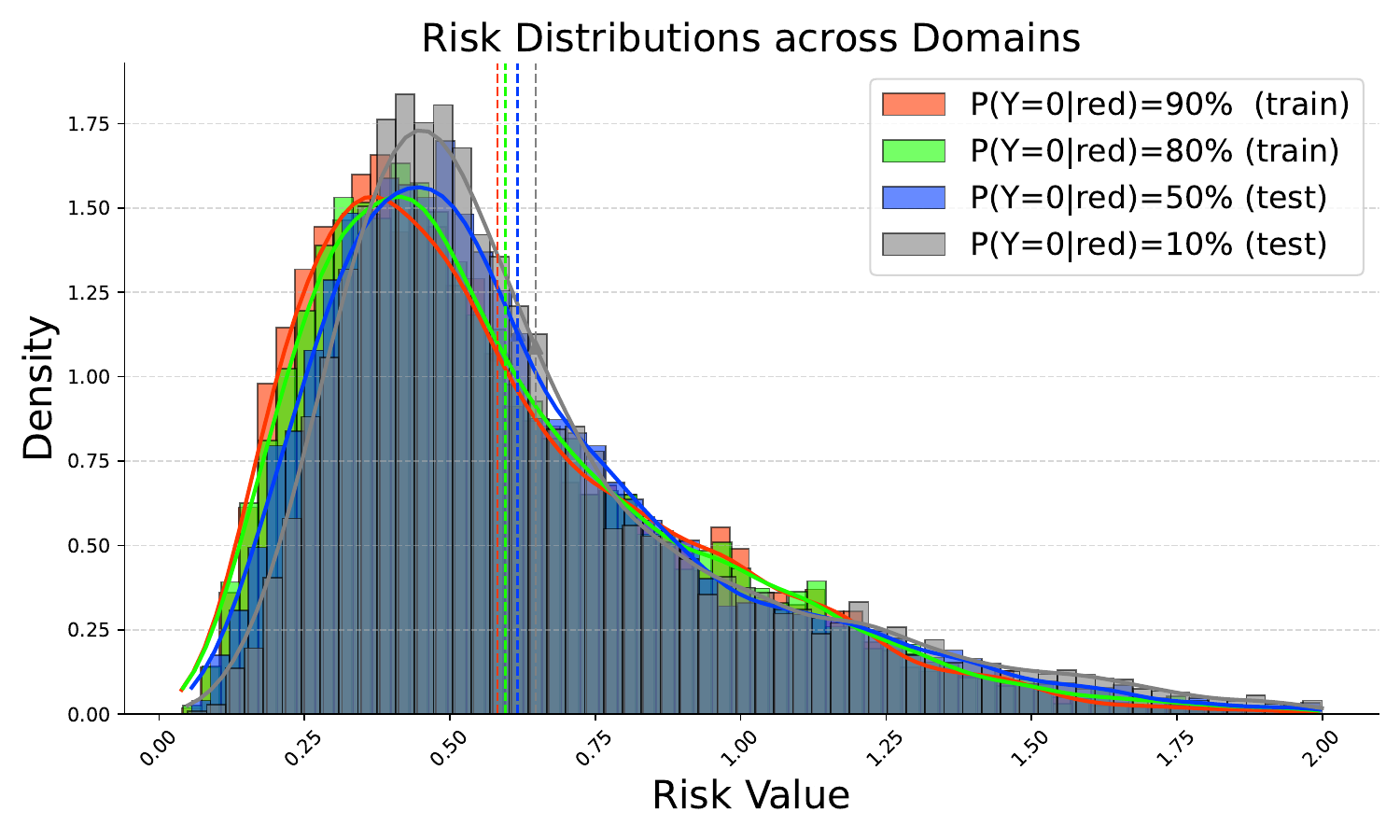} 
\par\end{centering}
}

\caption{Histograms with their KDE curves depicting the risk distributions
of $\protect\ModelERM$ and $\protect\Model$ across four domains
on $\protect\ColoredMNIST$. Vertical ticks denote the mean values
of all distributions. \label{fig:Histograms-of-risk-dists-CMNIST}}
\end{figure*}

We compare $\Model$ with $\ModelERM$ and three different types of
algorithms: robust optimisation (GroupDRO~\cite{sagawa2019distributionally},
IGA~\cite{koyama2020out}), causal methods learning invariance (IRM~\cite{arjovsky2019invariant},
VREx~\cite{krueger2021out}, EQRM~\cite{eastwood2022probable})
and representation distribution matching (MMD~\cite{li2018domain},
CORAL~\cite{sun2016deep}). All algorithms are run using two distinct
network configurations: (i) initialising the network randomly via
Xavier method~\cite{glorot2010understanding}; (ii) pre-training
the network with $\ModelERM$ for $400$ iterations prior to performing
the algorithms. Table~\ref{tab:-CMNIST_result_mainpaper} shows that
our proposed method $\Model$ surpasses all algorithms, irrespective
of the network configuration. $\Model$ exhibits improvements of $1.0$\%
and $6.8$\% over $\CORAL$, both without and with pre-trained $\ModelERM$,
respectively, underlining the effectiveness of aligning risk distributions
instead of high-dimensional representations. VREx and EQRM, which
pursue invariant predictors by equalising average training risks across
domains, demonstrate suboptimal performance compared to our approach.
This improvement arises from our consideration of the entire risk
distributions and the matching of all moments across them, which inherently
foster stronger invariance for DG. Notably, all methods experience
enhanced performance with $\ModelERM$ initialisation. $\Model$ even
excels beyond oracle performance ($\ModelERM$ trained on grayscale
digits with 50\% red and 50\% green) and converges towards optimality.

Figure~\ref{fig:Histograms-of-risk-dists-CMNIST} demonstrates histograms
with their KDE curves~\cite{parzen1962estimation} depicting the
risk distributions of $\ModelERM$ and $\Model$ across four domains.
The figure confirms our hypothesis that the disparities among risk
distributions could serve as a valuable signal of \emph{domain variation}.
$\ModelERM$'s histogram shows a clear difference between environments
with $90$\% and $80$\% chance of red digits labelled ``zero''
and those with only $50$\% or $10$\%. More, $\ModelERM$ tends to
overfit to training domains, which negatively impacts its generalisation
to test domains. Remarkably, $\Model$ effectively minimises the divergences
between risk distributions across all domains, including \emph{test
domains} \emph{with lower risks}. This also aligns with our motivation:
an invariant or stable feature-learning predictor, by displaying similar
risk distributions across domains, inherently boosts generalisation.

\subsection{DomainBed}

\begin{table*}[t]
\begin{centering}
\begin{tabular}{ccccccc}
\toprule 
Algorithm  & VLCS  & PACS  & OfficeHome  & TerraIncognita  & DomainNet  & Avg\tabularnewline
\midrule
\midrule 
ERM  & 77.5$\pm$0.4  & 85.5$\pm$0.2  & 66.5$\pm$0.3  & 46.1$\pm$1.8  & 40.9$\pm$0.1  & 63.3\tabularnewline
Mixup  & 77.4$\pm$0.6  & 84.6$\pm$0.6  & 68.1$\pm$0.3  & \textbf{47.9$\pm$0.8}  & 39.2$\pm$0.1  & 63.4\tabularnewline
MLDG  & 77.2$\pm$0.4  & 84.9$\pm$1.0  & 66.8$\pm$0.6  & 47.7$\pm$0.9  & 41.2$\pm$0.1  & 63.6\tabularnewline
GroupDRO  & 76.7$\pm$0.6  & 84.4$\pm$0.8  & 66.0$\pm$0.7  & 43.2$\pm$1.1  & 33.3$\pm$0.2  & 60.9\tabularnewline
IRM  & 78.5$\pm$0.5  & 83.5$\pm$0.8  & 64.3$\pm$2.2  & 47.6$\pm$0.8  & 33.9$\pm$2.8  & 61.6\tabularnewline
VREx  & 78.3$\pm$0.2  & 84.9$\pm$0.6  & 66.4$\pm$0.6  & 46.4$\pm$0.6  & 33.6$\pm$2.9  & 61.9\tabularnewline
EQRM  & 77.8$\pm$0.6  & 86.5$\pm$0.2  & 67.5$\pm$0.1  & 47.8$\pm$0.6  & 41.0$\pm$0.3  & 64.1\tabularnewline
Fish & 77.8$\pm$0.3  & 85.5$\pm$0.3  & 68.6$\pm$0.4  & 45.1$\pm$1.3  & 42.7$\pm$0.2  & 64.0\tabularnewline
Fishr & 77.8$\pm$0.1  & 85.5$\pm$0.4  & 67.8$\pm$0.1  & 47.4$\pm$1.6  & 41.7$\pm$0.0  & 64.0\tabularnewline
CORAL  & \textbf{78.8$\pm$0.6}  & 86.2$\pm$0.3  & \textbf{68.7$\pm$0.3}  & 47.6$\pm$1.0  & 41.5$\pm$0.1  & 64.6\tabularnewline
MMD  & 77.5$\pm$0.9  & 84.6$\pm$0.5  & 66.3$\pm$0.1  & 42.2$\pm$1.6  & 23.4$\pm$9.5  & 63.3\tabularnewline
\midrule 
$\Model$ (\emph{ours})  & 78.4$\pm$0.4  & \textbf{87.2$\pm$0.7}  & 67.3$\pm$0.4  & 47.5$\pm$1.0  & \textbf{43.4$\pm$0.3}  & \textbf{64.8}\tabularnewline
\bottomrule
\end{tabular}
\par\end{centering}
\caption{$\protect\DomainBed$ test accuracy where the best results are marked
as bold. Results of other methods are referenced from~\cite{eastwood2022probable,shi2022gradient}.
Model selection: training-domain validation set.\label{tab:DomainBed-results-mainpaper}}
\end{table*}

\paragraph*{Dataset and Protocol}

Following previous works~\cite{gulrajani2021in,eastwood2022probable},
we extensively evaluate all methods on five well-known DG benchmarks:
VLCS~\cite{fang2013unbiased}, PACS~\cite{li2017deeper}, OfficeHome~\cite{venkateswara2017deep},
TerraIncognita~\cite{beery2018recognition}, and DomainNet~\cite{peng2019moment}.
For a fair comparison, we reuse the training and evaluation protocol
in $\DomainBed$~\cite{gulrajani2021in}, including the dataset splits,
training iterations, and model selection criteria. Our evaluation
employs the leave-one-domain-out approach: each model is trained on
all domains except one and then tested on the excluded domain. The
final model is chosen based on its combined accuracy across all training-domain
validation sets.

\paragraph*{Implementation Details}

We use ResNet-50~\cite{he2016deep} pre-trained on ImageNet~\cite{olga2015imagenet}
as the default backbone. The model is optimised via the Adam optimiser
for $5,000$ iterations on every dataset. We follow~\cite{eastwood2022probable,krueger2021out}
to pre-train baselines with $\ModelERM$ for certain iterations before
performing the algorithms. Importantly, we find that achieving accurate
risk distribution matching using distribution samples requires larger
batch sizes - details of which are examined in our ablation studies.
For most datasets, the optimal batch size lies between $\left[70,100\right]$.
However, for huge datasets like TerraIncognita and DomainNet, it is
between $\left[30,60\right]$. Although computational resources limit
us from testing larger batch sizes, these ranges consistently achieve
strong performance on benchmarks. The matching coefficient $\lambda$
in our method is set in $[0.1,10.0]$. Additional hyper-parameters
like learning rate, dropout rate, or weight decay, adhere to the preset
ranges as detailed in~\cite{eastwood2022probable}. We provide more
implementation details in the supplementary material. We repeat our
experiments ten times with varied seed values and hyper-parameters
and report the average results.

\paragraph*{Experimental Results}

In Table~\ref{tab:DomainBed-results-mainpaper}, we show the average
out-of-domain (OOD) accuracies of state-of-the-art DG methods on five
benchmarks. Due to space constraints, domain-specific accuracies are
detailed in the supplementary material. We compare $\Model$ with
ERM and various types of algorithms: distributional robustness (GroupDRO),
causal methods learning invariance (IRM, VREx, EQRM), gradient matching
(Fish~\cite{shi2022gradient}, Fishr~\cite{rame2022fishr}), representation
distribution matching (MMD, CORAL) and other variants (Mixup~\cite{zhang2017mixup},
MLDG~\cite{li2018learning}). To ensure fairness in our evaluations,
we have used the same training data volume across all baselines, although
further employing augmentations can enhance models' performance.

On average, $\Model$ surpasses other baselines across all benchmarks,
notably achieving a $1.5$\% average improvement over $\ModelERM$.
The significant improvement of $\Model$ on DomainNet, a large-scale
dataset with 586,575 images across 6 domains, is worth mentioning.
This suggests that characterising domains with risk distributions
to achieve invariance effectively enhances OOD performance. Compared
to distributional robustness methods, $\Model$ notably outperforms
GroupDRO with improvement of $2.8$\% on PACS and a substantial $10.1$\%
on DomainNet. $\Model$ consistently improves over causality-based
methods that rely on the average risk for domain invariance. This
superiority attributes to our novel adoption of risk distributions,
achieving enhanced invariance for DG. Our remarkable improvement over
$\MMD$ suggests that aligning \emph{risk distributions} via the MMD
distance is more effective, easier to optimise than aligning representation
distributions. While $\Model$ typically outperforms $\CORAL$ and
Fish in OOD scenarios, it only remains competitive or sometimes underperforms
on certain datasets like OfficeHome. This decrease in performance
may stem from the dataset's inherent tendency to overfit within our
risk distribution alignment objective. OfficeHome has only average
about 240 samples per class, significantly fewer than other datasets
with at least 1,400. This reduced sample size may not provide sufficiently
diverse risk distributions to capture stable class features, resulting
in overfitting on the training set. Despite these limitations, our
OfficeHome results still outperform several well-known baselines such
as MLDG, VREx, or ERM. For a detailed discussion on this challenge,
please refer to our supplementary material.

\subsection{Analysis}

\begin{figure}
\subfloat[A small gap between two variants of $\protect\Model$.\label{fig:analysis_A-small-gap}]{\begin{centering}
\includegraphics[width=0.455\columnwidth]{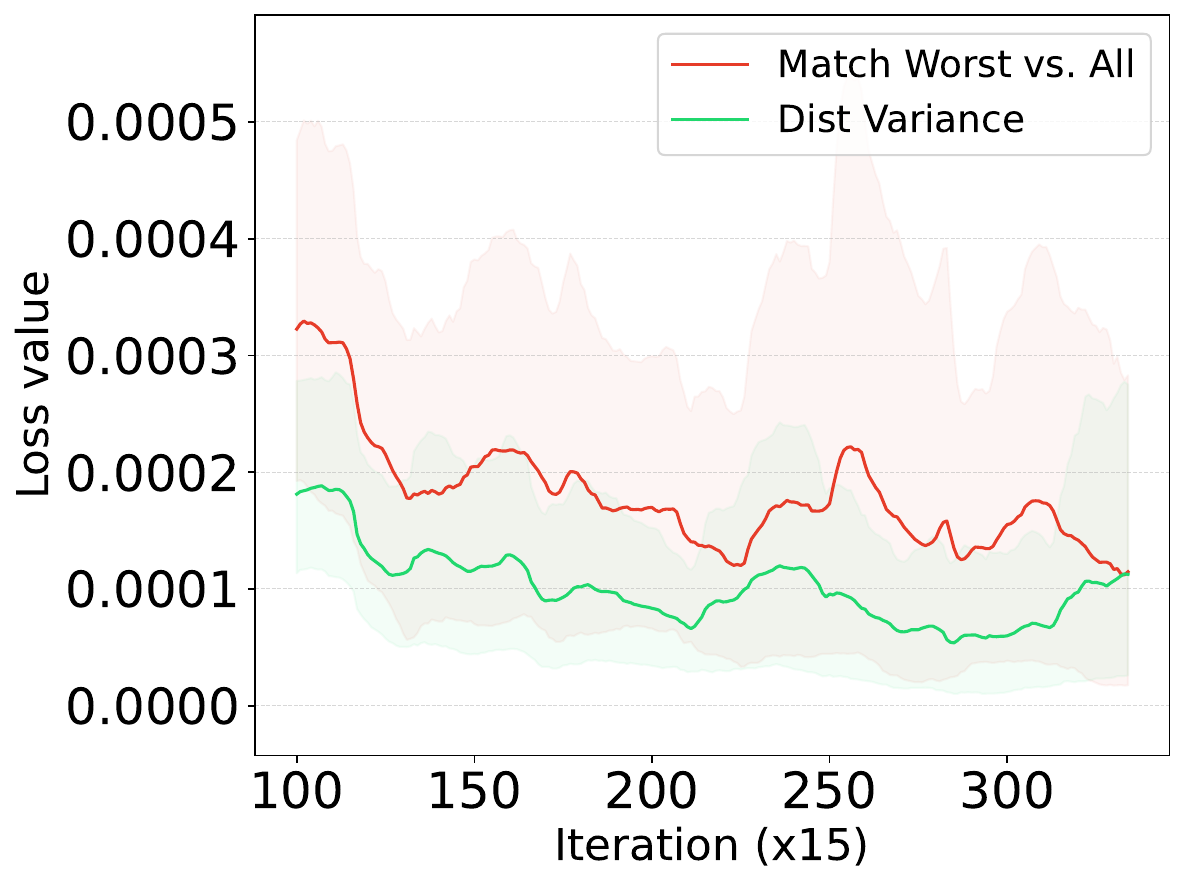}
\par\end{centering}

}\hfill{}\subfloat[Learning curves of $\protect\Model$ and other baselines.\label{fig:analysis_Learning-curves}]{\begin{centering}
\includegraphics[width=0.465\columnwidth]{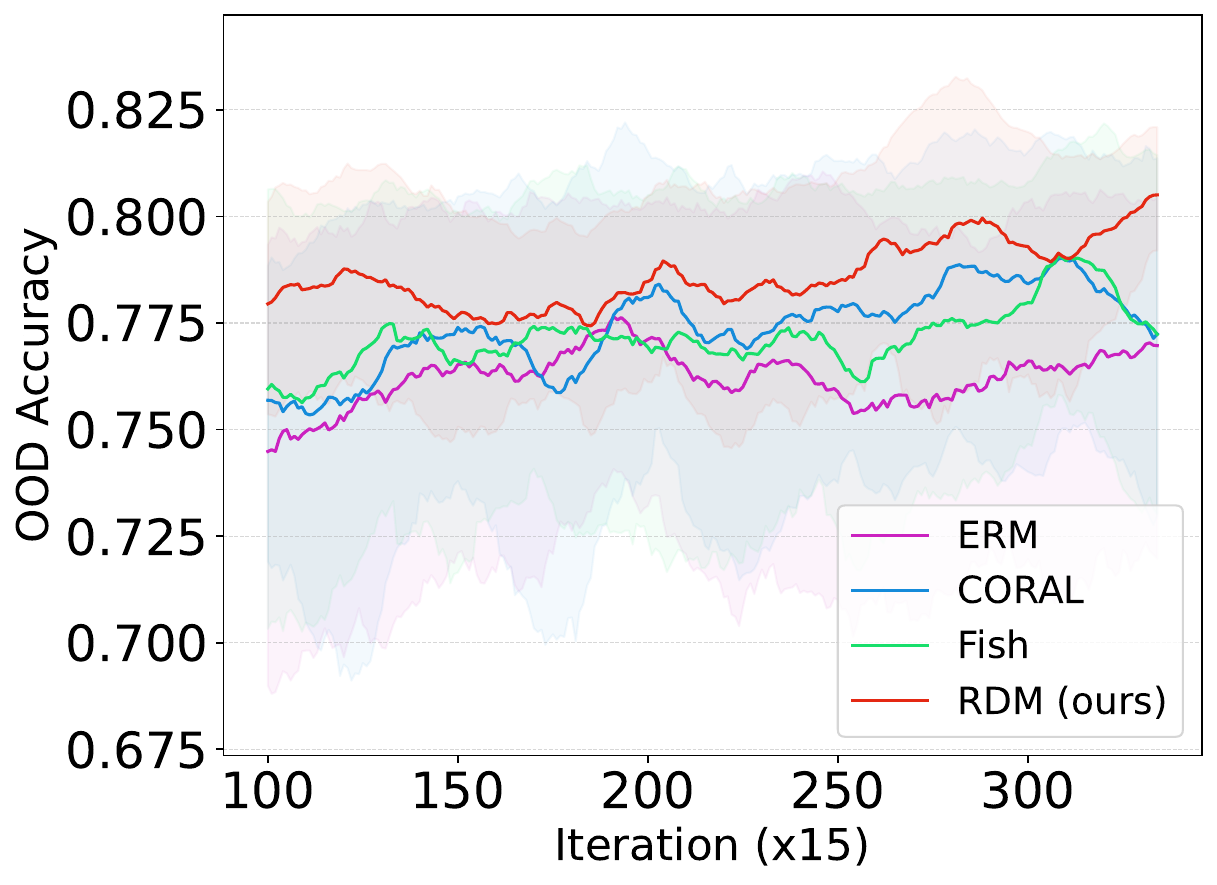}
\par\end{centering}

}

\caption{Figure~\ref{fig:analysis_A-small-gap} supports our claim about the
approximation of the distributional variance, while Figure~\ref{fig:analysis_Learning-curves}
compares the OOD performance learning curves of RDM with other methods.
These insights are visualised every 15 iterations during PACS dataset
training, excluding the OOD Sketch domain. After $\protect\Model$
is pre-trained with ERM for $100(\times15)$ iterations, our visual
analysis commences, ensuring a fair comparison.}
\end{figure}

In this section, we provide empirical evidence backing our claims
in Section~\ref{sec:Methodology}. In Figure~\ref{fig:analysis_A-small-gap},
we highlight a small gap when aligning the risk distribution of the
worst-case domain with that of all domains combined ($\Model$ with
$\mathcal{\hat{L}}_{\Model}$), compared to directly optimising the
distributional variance ($\Model$ with $\mathcal{\mathcal{L}}_{\Model}$).
Notably, $\mathcal{\hat{L}}_{\Model}$ consistently represents an
upper bound of $\mathcal{\mathcal{L}}_{\Model}$, which is sensible
since the worst-case domain often exhibits the most distinct risk
distribution. This suggests that optimising $\mathcal{\hat{L}}_{\Model}$
also helps reduce the distributional variance $\mathcal{\mathcal{L}}_{\Model}$,
bringing the risk distributions across domains closer.

\begin{table}
\begin{centering}
\begin{tabular}{cccc}
\toprule 
\textbf{Algorithm} & Training (s) & Mem (GiB) & Acc (\%)\tabularnewline
\midrule
\midrule 
Fish & 11,502 & \textbf{5.26} & 42.7\tabularnewline
\midrule 
CORAL & 11,504 & 17.00 & 41.5\tabularnewline
\midrule 
$\Model$ with $\mathcal{\mathcal{L}}_{\Model}$ & 9,854 & 16.94 & 43.1\tabularnewline
\midrule 
$\Model$ with $\mathcal{\hat{L}}_{\Model}$ & \textbf{7,749} & 16.23 & \textbf{43.4}\tabularnewline
\bottomrule
\end{tabular}
\par\end{centering}
\caption{Comparison between Fish, CORAL, and two variants of our method in
terms of the training time (seconds), memory usage per iteration (GiB)
and accuracy (\%) on DomainNet. \label{tab:Efficiency-and-Effectiveness-compare}}
\end{table}

When the number of training domains grows, especially with large-scale
datasets like DomainNet, emphasising the risk distribution of the
worst-case domain not only proves to be a more efficient approach
but also significantly enhances OOD performance. In our exploration
of training resources for DomainNet, we study three matching methods:
Fish, $\CORAL$ and two variants of our $\Model$ method. For a fair
evaluation, all experiments were conducted with identical GPU resources,
settings, and hyper-parameters, such as batch size or training iterations.
Results can be seen in Table~\ref{tab:Efficiency-and-Effectiveness-compare}.
Full details on training resources for these methods on other datasets
are available in the supplementary material due to space constraints. 

Our $\Model$ with the $\mathcal{\hat{L}}_{\Model}$ objective proves
fastest in training and achieves the notably highest $43.4$\% accuracy
on DomainNet. While $\Model$ demands more memory than Fish, due to
the storage of $\MMD$ distance values, it can be trained in less
time - under an hour - and still delivers a $0.7$\% performance boost.
This gain over Fish, a leading gradient maching method on DomainNet,
is significant. Among two variants of $\Model$, the one using $\mathcal{\hat{L}}_{\Model}$
is both the fastest and most accurate, justifying our claims on the
benefits of aligning the risk distribution of the worst-case domain. 

More, to further highlight the efficacy of risk distribution alignment
for DG, we compare the OOD performance learning curves of $\Model$
with competing baselines using representation (CORAL) and gradient
(Fish) alignments, as depicted in Figure~\ref{fig:analysis_Learning-curves}.
Impressively, $\Model$ consistently outperforms, demonstrating enhanced
generalisation throughout the training process.

\subsection{Ablation studies}

We explore the impact of the matching coefficient $\lambda$ and training
batch size on risk distribution matching, using primarily the PACS
dataset for brevity. While other datasets exhibit similar trends,
their detailed results are provided in the supplementary material.

\paragraph*{Matching coefficient $\lambda$}

Figure~\ref{fig:ablation_study_lambda} illustrates the performance
of $\Model$ on the PACS dataset for varying values of the matching
coefficient $\lambda$, spanning $\left\{ 0.1,1.0,2.5,5.0,7.5,10.0\right\} $.
Notably, as $\lambda$ increases, $\Model$'s accuracy consistently
improves, justifying the significance of our risk distribution matching
module in fostering generalisation. In particular, when $\lambda=5.0$,
$\Model$ demonstrates a notable $1.6$\% average accuracy boost across
all domains, in contrast to when using only $\lambda=0.1$. Across
most datasets, a $\lambda$ value within $\left[0.1,10.0\right]$
appears sufficient to produce good results.

\paragraph*{Batch size }

We study the impact of batch size on $\Model$'s performance. Our
assumption is that achieving accurate risk distribution matching through
data samples would require larger batch sizes. Figure~\ref{fig:ablation_study_batchsize}
validates this, revealing enhanced generalisation results on PACS
with increased batch sizes. For PACS, sizes between $[70,100]$ yield
promising, potentially optimal outcomes, despite computational limitations
restrict our exploration of larger sizes.

\begin{figure}
\subfloat[Matching coefficient $\lambda$.\label{fig:ablation_study_lambda}]{\begin{centering}
\includegraphics[width=0.49\columnwidth]{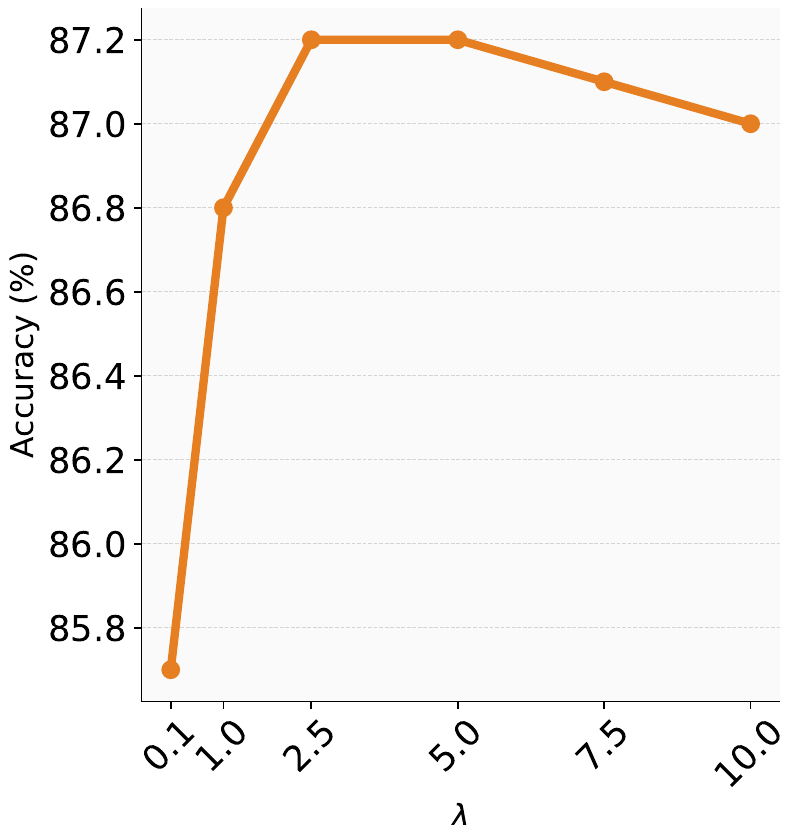} 
\par\end{centering}
}\hspace*{\fill}\subfloat[Batch size.\label{fig:ablation_study_batchsize}]{\begin{centering}
\includegraphics[width=0.49\columnwidth]{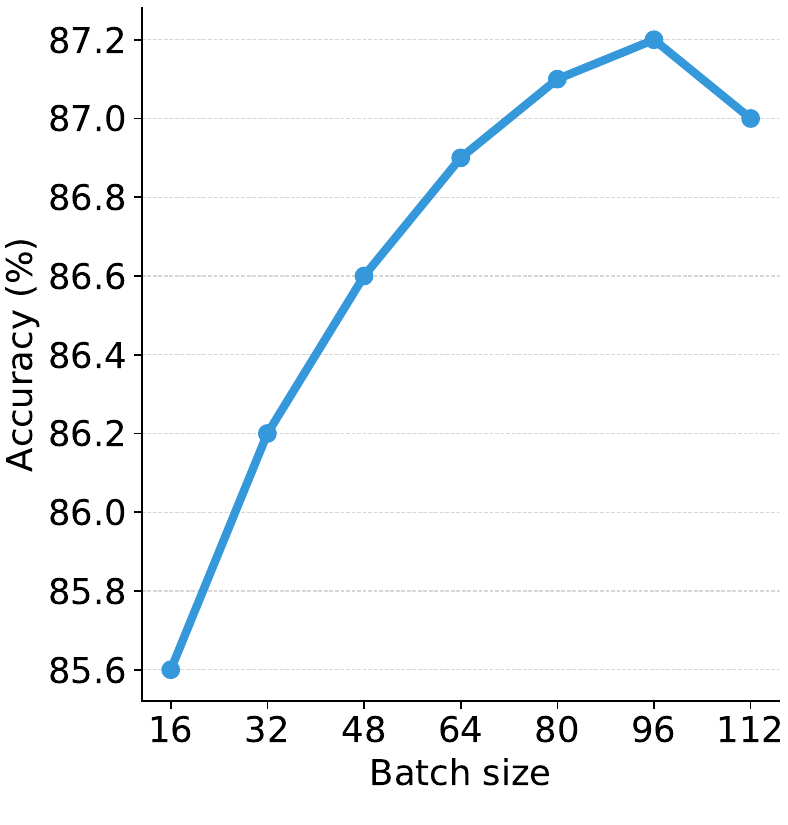} 
\par\end{centering}
}

\caption{Ablation studies on the effects of the matching coefficient $\lambda$
and the training batch size on the PACS dataset. }
\end{figure}

\section{Conclusion\label{sec:Conclusion}}

We have demonstrated that $\Model$, a novel matching method for domain
generalisation (DG), provides enhanced generalisation capability by
aligning risk distributions across domains. $\Model$ efficiently
overcomes high-dimensional challenges of conventional DG matching
methods. $\Model$ is built on our observation that risk distributions
can effectively represent the differences between training domains.
By minimising these divergences, we can achieve an invariant and generalisable
predictor. We further improve $\Model$ by matching only the risk
distribution of the worst-case domain with the aggregate from all
domains, bypassing the need to directly compute the distributional
variance. This approximate version not only offers computational efficiency
but also delivers improved out-of-domain results. Our extensive experiments
on several benchmarks reveal that $\Model$ surpasses leading DG techniques.
We hope our work can inspire further investigations into the benefits
of risk distributions for DG. 
{\small{}\bibliographystyle{ieee_fullname}
\bibliography{wacv24}
 }{\small\par}

\pagebreak{}

\section{Supplementary Material}

In this supplementary material, we first provide a detailed proof
for our theorem on distributional variance, as outlined in Section~\ref{sec:supp-Theoretical-Results}.
Next, in Section~\ref{sec:supp-More-Implementation-Details}, we
detail more about our experimental settings, covering both the $\ColoredMNIST$
synthetic dataset~\cite{arjovsky2019invariant} and the extensive
benchmarks from the DomainBed suite~\cite{gulrajani2021in} in the
main text. Additional ablation studies and discussions on our proposed
method are given in Section~\ref{sec:supp-Additional-Ablation-Studies}.
Finally, Section~\ref{sec:supp-More-experimental-results} provides
domain-specific out-of-domain accuracies for each dataset within the
DomainBed suite.

\section{Theoretical Results\label{sec:supp-Theoretical-Results}}

We provide the proof for the theorem on distributional variance discussed
in the main paper. We revisit the concept of kernel mean embedding~\cite{smola2007hilbert}
to express the risk distribution $\mathcal{T}_{e}$ of domain $e$.
Particularly, we represent $\mathcal{T}_{e}$ through its embedding,
$\mu_{\mathcal{T}_{e}}$, in a reproducing kernel Hilbert space (RKHS)
denoted as $\mathcal{H}$. This is achieved by using a feature map
$\phi:\mathbb{R}\rightarrow\mathcal{H}$ below: 
\begin{align}
\mu_{\mathcal{T}_{e}} & \coloneqq\mathbb{E}_{R_{e}\sim\mathcal{T}_{e}}\left[\phi\left(R_{e}\right)\right]\label{eq:mean_embed-1}\\
 & =\ \mathbb{E}_{R_{e}\sim\mathcal{T}_{e}}\left[k\left(R_{e},\cdot\right)\right]
\end{align}
where a kernel function $k\left(\cdot,\cdot\right):\mathbb{R}\times\mathbb{R}\rightarrow\mathbb{R}$
is introduced to bypass the explicit specification of $\phi$.
\begin{thm*}
~\cite{muandet2013domain} Denote $\mathcal{T}=\frac{1}{m}\sum_{e=1}^{m}\mathcal{T}_{e}$
the probability distribution over the risks of all samples in the
entire training set, or equivalently, the set of all $m$ domains.
Given the distributional variance $\mathbb{V}_{\mathbb{\mathcal{H}}}\left(\left\{ \mathcal{T}_{1},...,\mathcal{T}_{m}\right\} \right)$
is calculated with a characteristic kernel $k$, $\mathbb{V}_{\mathcal{H}}\left(\left\{ \mathcal{T}_{1},...,\mathcal{T}_{m}\right\} \right)=0$
if and only if $\mathcal{T}_{1}=...=\mathcal{T}_{m}\left(=\mathcal{T}\right)$.
\end{thm*}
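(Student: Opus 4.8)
The plan is to prove the biconditional by handling its two directions separately, with the ``if'' direction being essentially immediate and the ``only if'' direction reducing, after one elementary observation, to the injectivity of the kernel mean embedding that the characteristic hypothesis on $k$ supplies. First I would dispatch the easy direction: suppose $\mathcal{T}_1=\cdots=\mathcal{T}_m$ and call this common distribution $Q$. Then the aggregate satisfies $\mathcal{T}=\frac{1}{m}\sum_{e=1}^m\mathcal{T}_e=Q$ as well, so by the definition of the embedding in Eq.~\ref{eq:mean_embed} we get $\mu_{\mathcal{T}_e}=\mu_Q=\mu_{\mathcal{T}}$ for every $e$. Every summand $\left\Vert\mu_{\mathcal{T}_e}-\mu_{\mathcal{T}}\right\Vert_{\mathcal{H}}^2$ then vanishes, and hence $\mathbb{V}_{\mathcal{H}}\left(\left\{\mathcal{T}_1,\ldots,\mathcal{T}_m\right\}\right)=0$.

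For the forward direction I would exploit that $\mathbb{V}_{\mathcal{H}}$ is, by Eq.~\ref{eq:dist_variance_1}, a finite average of the non-negative quantities $\left\Vert\mu_{\mathcal{T}_e}-\mu_{\mathcal{T}}\right\Vert_{\mathcal{H}}^2\ge 0$. An average of non-negative reals equals zero only if each term is zero, so $\mathbb{V}_{\mathcal{H}}=0$ forces $\left\Vert\mu_{\mathcal{T}_e}-\mu_{\mathcal{T}}\right\Vert_{\mathcal{H}}=0$, i.e.\ $\mu_{\mathcal{T}_e}=\mu_{\mathcal{T}}$ in $\mathcal{H}$, for every $e$. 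Recognising this squared norm as $\MMD^2\left(\mathcal{T}_e,\mathcal{T}\right)$ exactly as in the main text, the crucial step is to pass from equality of embeddings back to equality of distributions. This is where the assumption that $k$ is characteristic is used: a characteristic kernel is precisely one for which the map $P\mapsto\mu_P$ is injective on probability measures, so $\mu_{\mathcal{T}_e}=\mu_{\mathcal{T}}$ yields $\mathcal{T}_e=\mathcal{T}$. Applying this for each $e$ delivers $\mathcal{T}_1=\cdots=\mathcal{T}_m=\mathcal{T}$, completing the equivalence.

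I expect the only substantive point to be this invocation of injectivity; everything else is bookkeeping about a vanishing sum of squares. Since the fact that the RBF kernel is characteristic is a cited, known result, I would appeal to it directly rather than reprove it, so the ``hard part'' is conceptual rather than computational. Optionally, to make the argument fully transparent, I would record the linearity of the embedding, namely $\mu_{\mathcal{T}}=\frac{1}{m}\sum_{e=1}^m\mu_{\mathcal{T}_e}$, which follows from pushing the expectation through the mixture $\mathcal{T}=\frac{1}{m}\sum_{e=1}^m\mathcal{T}_e$; this clarifies that $\mathbb{V}_{\mathcal{H}}$ is genuinely the spread of the embeddings $\left\{\mu_{\mathcal{T}_e}\right\}$ about their mean and could equivalently be rewritten via average pairwise distances $\frac{1}{2m^2}\sum_{i,j}\left\Vert\mu_{\mathcal{T}_i}-\mu_{\mathcal{T}_j}\right\Vert_{\mathcal{H}}^2$, giving an alternative but equally short route to the same conclusion.
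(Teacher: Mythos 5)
Your proposal is correct and follows essentially the same route as the paper's own proof: both directions are handled by noting that a vanishing average of non-negative squared norms forces $\mu_{\mathcal{T}_e}=\mu_{\mathcal{T}}$ for every $e$, then invoking the injectivity of the kernel mean embedding for a characteristic kernel to conclude $\mathcal{T}_e=\mathcal{T}$, and conversely using the linearity $\mu_{\mathcal{T}}=\frac{1}{m}\sum_{e=1}^{m}\mu_{\mathcal{T}_e}$ to show all summands vanish when the distributions coincide. The only differences are cosmetic: you cite the characteristic property as injectivity of $P\mapsto\mu_P$ where the paper cites it as the metric property of MMD, and your optional pairwise-distance rewriting is an extra remark not present in the paper.
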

\begin{proof}
In our methodology, we employ the RBF kernel, which is \emph{characteristic}
in nature. As a result, the term $\left\Vert \mu_{\mathcal{T}_{e}}-\mu_{\mathcal{T}}\right\Vert _{\mathcal{H}}^{2}$
acts as a metric within the Hilbert space $\mathcal{H}$~\cite{muandet2013domain}.
Importantly, this metric reaches zero if and only if $\left(\mathcal{T}_{e}=\mathcal{T}\right)$~\cite{sriperumbudur2010hilbert}.
Let's consider the distributional variance, $\mathbb{V}_{\mathcal{H}}\left(\left\{ \mathcal{T}_{1},...,\mathcal{T}_{m}\right\} \right)$,
which is defined below:

\begin{equation}
\mathbb{V}_{\mathcal{H}}=\frac{1}{m}\sum_{e=1}^{m}\left\Vert \mu_{\mathcal{T}_{e}}-\mu_{\mathcal{T}}\right\Vert _{\mathcal{H}}^{2}
\end{equation}
This variance becomes zero if and only if $\left\Vert \mu_{\mathcal{T}_{e}}-\mu_{\mathcal{T}}\right\Vert _{\mathcal{H}}^{2}=0$
for each $e$. This logically implies that $\left(\mathcal{T}_{e}=\mathcal{T}\right)$
for all $e$, leading to $\left(\mathcal{T}_{1}=\mathcal{T}_{2}=...=\mathcal{T}_{m}\right)$. 

Conversely, we assume that $\left(\mathcal{T}_{1}=\mathcal{T}_{2}=...=\mathcal{T}_{m}\right)$.
Given this condition, for any $e$, it follows that:

\begin{equation}
\mu_{\mathcal{T}}=\frac{1}{m}\sum_{e=1}^{m}\mu_{\mathcal{T}_{e}}=\mu_{\mathcal{T}_{e}}
\end{equation}
which implies

\begin{equation}
\left\Vert \mu_{\mathcal{T}_{e}}-\mu_{\mathcal{T}}\right\Vert _{\mathcal{H}}^{2}=0.
\end{equation}
Consequently, by the given definition of distributional variance,
we have: $\mathbb{V}_{\mathcal{H}}\left(\left\{ \mathcal{T}_{1},...,\mathcal{T}_{m}\right\} \right)=\frac{1}{m}\sum_{e=1}^{m}\left\Vert \mu_{\mathcal{T}_{e}}-\mu_{\mathcal{T}}\right\Vert _{\mathcal{H}}^{2}=0$.
This completes the proof.
\end{proof}

\section{More implementation details\label{sec:supp-More-Implementation-Details}}

For our experiments, we leveraged the PyTorch DomainBed toolbox~\cite{gulrajani2021in,eastwood2022probable}
and utilised an Ubuntu 20.4 server outfitted with a 36-core CPU, 767GB
RAM, and NVIDIA V100 32GB GPUs. The software stack included Python
3.11.2, PyTorch 1.7.1, Torchvision 0.8.2, and Cuda 12.0. Additional
implementation details, beyond the hyper-parameters discussed in the
main text, are elaborated below.

\subsection{ColoredMNIST}

In alignment with~\cite{eastwood2022probable}, we performed experiments
on the ColoredMNIST dataset, the results of which are detailed in
Table~\ref{tab:-CMNIST_result_mainpaper} in the main paper. We partitioned
the original MNIST training dataset into distinct training and validation
sets of 25,000 and 5,000 samples for each of two training domains,
respectively. The original MNIST test set was adapted to function
as our test set. Particularly, we synthesised this test set to introduce
a distribution shift: red digits have only a $10$\% probability of
being classified as ``zero'', compared to 80\% and 90\% in the training
sets for different domains. Besides the hyper-parameters highlighted
in the main paper, we also leveraged a cosine annealing scheduler
to further optimise the training process like other baselines. 

For our $\Model$ method, we constrained the alignment to focus only
on \emph{the first two empirical moments (mean and variance)} of $\mathcal{T}_{w}$
and $\mathcal{T}$. We experimented with five different penalty weight
values for $\lambda$ in the range of $\left\{ 500,1000,2500,5000,10000\right\} $,
running each experiment ten times and varying $\lambda$. The reported
results are the average accuracies and their standard deviations over
these 10 runs, all measured on a test-domain test set. We adhered
to test-domain validation for model selection across all methods,
as recommended by~\cite{gulrajani2021in}. We reference results for
other methods from~\cite{eastwood2022probable}.

\subsection{DomainBed}

\subsubsection{Description of benchmarks}

\begin{table*}
\begin{centering}
\begin{tabular}{>{\raggedright}m{0.25\textwidth}>{\centering}p{0.3\textwidth}cc}
\toprule 
Parameter & Dataset & Default value & Random distribution\tabularnewline
\midrule
\midrule 
steps & All & 5,000 & 5,000\tabularnewline
\midrule 
learning rate & All & $5\text{e-5}$ & $10^{\text{Uniform\ensuremath{\left(-4.5,-4\right)}}}$\tabularnewline
\midrule 
dropout & All & 0 & RandomChoice$\left(\left[0,0.003,0.03\right]\right)$\tabularnewline
\midrule 
weight decay & All & 0 & $10^{\text{Uniform\ensuremath{\left(-8,-5\right)}}}$\tabularnewline
\midrule 
\multirow{2}{0.25\textwidth}{batch size} & PACS / VLCS / OfficeHome & 88 & Uniform($70,100$)\tabularnewline
\cmidrule{2-4} \cmidrule{3-4} \cmidrule{4-4} 
 & TerraIncognita / DomainNet & 40 & Uniform($30,60$)\tabularnewline
\midrule 
\multirow{2}{0.25\textwidth}{matching coefficient $\lambda$} & All except DomainNet & 5.0 & Uniform($0.1,10.0$)\tabularnewline
\cmidrule{2-4} \cmidrule{3-4} \cmidrule{4-4} 
 & DomainNet & 0.5 & Uniform($0.1,1.0$)\tabularnewline
\midrule 
\multirow{2}{0.25\textwidth}{pre-trained iterations} & All except DomainNet & 1500 & Uniform($800,2700$)\tabularnewline
\cmidrule{2-4} \cmidrule{3-4} \cmidrule{4-4} 
 & DomainNet & 2400 & Uniform($1500,3000$)\tabularnewline
\midrule 
learning rate after pre-training & All & $1.5\text{e-5}$ & Uniform($8\text{e-6},2\text{e-5}$)\tabularnewline
\midrule 
\multirow{2}{0.25\textwidth}{variance regularisation \\
coefficient} & PACS / VLCS & 0.004 & Uniform($0.001,0.007$)\tabularnewline
\cmidrule{2-4} \cmidrule{3-4} \cmidrule{4-4} 
 & OfficeHome / TerraIncognita /\\
DomainNet & 0 & 0\tabularnewline
\bottomrule
\end{tabular}
\par\end{centering}
\caption{Hyper-parameters, along with their default values and distributions,
are optimised through random search across the five benchmark datasets.\label{tab:supp-Hyperparameters}}
\end{table*}

For our evaluations, we leveraged five large-scale benchmark datasets
from the DomainBed suite~\cite{gulrajani2021in}, comprising:
\begin{itemize}
\item VLCS~\cite{fang2013unbiased}: The dataset encompasses four photographic
domains: Caltech101, LabelMe, SUN09, VOC2007. It contains 10,729 examples,
each with dimensions $\left(3,224,224\right)$, and spans five distinct
classes.
\item PACS~\cite{li2017deeper}: The dataset includes 9,991 images from
four different domains: Photo (P), Art-painting (A), Cartoon (C),
and Sketch (S). These domains each have their own unique style, making
this dataset particularly challenging for out-of-distribution (OOD)
generalisation. Each domain has seven classes.
\item OfficeHome~\cite{venkateswara2017deep}: The dataset features 15,500
images of objects commonly found in office and home settings, categorised
into 65 classes. These images are sourced from four distinct domains:
Art (A), Clipart (C), Product (P), and Real-world (R).
\item TerraIncognita~\cite{beery2018recognition}: The dataset includes
24,788 camera-trap photographs of wild animals captured at locations
$\left\{ \text{L100},\text{L38},\text{L43},\text{L46}\right\} $.
Each image has dimensions $\left(3,224,224\right)$ and falls into
one of 10 distinct classes.
\item DomainNet~\cite{peng2019moment}: The largest dataset in DomainBed,
DomainNet, contains 586,575 examples in dimensions $\left(3,224,224\right)$,
spread across six domains $\left\{ \text{clipart},\text{infograph},\text{painting},\text{quickdraw},\text{real},\text{sketch}\right\} $
and encompassing 345 classes.
\end{itemize}

\subsubsection{Our implementation details}

\begin{table*}[t]
\subfloat[PACS]{\begin{centering}
\begin{tabular}{cccc}
\toprule 
\textbf{Algorithm} & Training (s) & Mem (GiB) & Acc (\%)\tabularnewline
\midrule
\midrule 
Fish & 7,566 & \textbf{7.97} & 85.5\tabularnewline
\midrule 
CORAL & 4,485 & 21.81 & 86.2\tabularnewline
\midrule 
$\Model$ with $\mathcal{\mathcal{L}}_{\Model}$ & 4,783 & 21.87 & 86.6\tabularnewline
\midrule 
$\Model$ with $\mathcal{\hat{L}}_{\Model}$ & \textbf{4,214} & 21.71 & \textbf{87.2}\tabularnewline
\bottomrule
\end{tabular}
\par\end{centering}
}\hspace*{\fill}\subfloat[VLCS]{\begin{centering}
\begin{tabular}{cccc}
\toprule 
\textbf{Algorithm} & Training (s) & Mem (GiB) & Acc (\%)\tabularnewline
\midrule
\midrule 
Fish & 13,493 & \textbf{7.97} & 77.8\tabularnewline
\midrule 
CORAL & 6,329 & 21.81 & \textbf{78.8}\tabularnewline
\midrule 
$\Model$ with $\mathcal{\mathcal{L}}_{\Model}$ & 9,441 & 21.87 & 77.8\tabularnewline
\midrule 
$\Model$ with $\mathcal{\hat{L}}_{\Model}$ & \textbf{6,151} & 21.71 & 78.4\tabularnewline
\bottomrule
\end{tabular}
\par\end{centering}
}

\subfloat[OfficeHome]{\begin{centering}
\begin{tabular}{cccc}
\toprule 
\textbf{Algorithm} & Training (s) & Mem (GiB) & Acc (\%)\tabularnewline
\midrule
\midrule 
Fish & 9,035 & \textbf{7.97} & 68.6\tabularnewline
\midrule 
CORAL & 4,762 & 21.81 & \textbf{68.7}\tabularnewline
\midrule 
$\Model$ with $\mathcal{\mathcal{L}}_{\Model}$ & 5,467 & 21.87 & 67.0\tabularnewline
\midrule 
$\Model$ with $\mathcal{\hat{L}}_{\Model}$ & \textbf{4,588} & 21.71 & 67.3\tabularnewline
\bottomrule
\end{tabular}
\par\end{centering}
}\hspace*{\fill}\subfloat[TerraIncognita]{\begin{centering}
\begin{tabular}{cccc}
\toprule 
\textbf{Algorithm} & Training (s) & Mem (GiB) & Acc (\%)\tabularnewline
\midrule
\midrule 
Fish & 6,019 & \textbf{4.08} & 45.1\tabularnewline
\midrule 
CORAL & 2,973 & 10.21 & \textbf{47.6}\tabularnewline
\midrule 
$\Model$ with $\mathcal{\mathcal{L}}_{\Model}$ & 4,040 & 10.17 & 47.1\tabularnewline
\midrule 
$\Model$ with $\mathcal{\hat{L}}_{\Model}$ & \textbf{2,697} & 10.11 & 47.5\tabularnewline
\bottomrule
\end{tabular}
\par\end{centering}
}

\caption{Comparison between Fish, CORAL, and two variants of our method in
terms of the training time (seconds), memory usage per iteration (GiB)
and accuracy (\%) on PACS, VLCS, OfficeHome and TerraIncognita. \label{tab:supp-Efficiency-and-Effectiveness-compare}}
\end{table*}

To ensure rigorous evaluation and a fair comparison with existing
baselines~\cite{eastwood2022probable,rame2022fishr}, we conducted
experiments on five datasets from the DomainBed suite, the results
of which are elaborated in Table~\ref{tab:DomainBed-results-mainpaper}
of the main text. In alignment with standard practices, we optimised
hyper-parameters for each domain through a randomised search across
20 trials on the validation set, utilising a joint distribution as
specified in Table~\ref{tab:supp-Hyperparameters}. The dataset from
each domain was partitioned into an 80\% split for training and testing,
and a 20\% split for hyper-parameter validation. A comprehensive discussion
on the hyper-parameters used in our experiments is provided below.
For each domain, we performed our experiments ten times, employing
varied seed values and hyper-parameters within the specified range,
and reported the averaged results with their standard deviations.
We reference results for other methods from~\cite{eastwood2022probable,rame2022fishr}.
We kindly refer readers to our given source code for more detail.

In our methodology, we employed the MMD distance for aligning risk
distributions $\mathcal{T}_{e}$ and $\mathcal{T}$, as described
in Section~\ref{sec:Methodology}. Utilising the RBF kernel, we compute
the average MMD distance across an expansive bandwidth spectrum $\left\{ 0.0001,0.001,0.01,0.1,1,10,100,1000\right\} $,
bypassing the need for tuning this parameter.

Inspired by recent insights~\cite{eastwood2022probable}, we incorporated
an initial pre-training-with-ERM phase to further improve the OOD
performance. DomainNet, given its scale, requires longer ERM pre-training;
specific parameters for all datasets are provided in Table~\ref{tab:supp-Hyperparameters}.
Our initial learning rate lies within $\left[1\text{e-4.5},1\text{e-4}\right]$,
which adapts to $\left[8\text{e-6},2\text{e-5}\right]$ post-ERM pre-training.
Incorporating additional variance regularisation on $\mathcal{T}_{e}$
and $\mathcal{T}$ proves beneficial for the PACS and VLCS datasets.
This approach constrains the induced risks to fall within narrower,
more optimal value ranges, facilitating more effective risk distribution
alignment. Optimal regularisation coefficients for this strategy are
detailed in Table~\ref{tab:supp-Hyperparameters}.

We maintain minimal dropout and weight decay, reserving our focus
for risk distribution alignment. Optimal batch sizes differ: $\left[70,100\right]$
for VLCS and OfficeHome, and $\left[30,60\right]$ for TerraIncognita
and DomainNet. Despite computational constraints limiting our ability
to test larger batch sizes, the selected ranges yield robust performance
across datasets.

Regarding the matching coefficient $\lambda$ in our objective, most
datasets work well within $\left[0.1,10.0\right]$, but DomainNet
prefers a narrower $\left[0.1,1.0\right]$ range. This fine-tuning
is key, especially for large-scale datasets, to balance risk reduction
and cross-domain alignment in the early training stages.

\section{More ablation studies and analyses\label{sec:supp-Additional-Ablation-Studies}}

\subsection{Efficacy of DG matching methods }

Table~\ref{tab:supp-Efficiency-and-Effectiveness-compare} compares
the efficiency and effectiveness of various methods: Fish, CORAL,
$\Model$ with $\mathcal{\mathcal{L}}_{\Model}$, and $\Model$ with
$\mathcal{\hat{L}}_{\Model}$ across several benchmarks - PACS, VLCS,
OfficeHome, and TerraIncognita. Notably, the approximate variant,
denoted as $\Model$ with $\mathcal{\hat{L}}_{\Model}$, stands out
for its exceptional performance. This version emphasises the alignment
of risk distribution for the worst-case domain and exhibits both faster
training times and improved accuracy over its counterpart that optimises
distributional variance, $\Model$ with $\mathcal{\mathcal{L}}_{\Model}$.
For instance, on the VLCS dataset, this variant is trained in under
an hour while achieving a $0.6$\% accuracy boost.

When compared to the gradient-matching Fish method, our approach demonstrates
similar advantages but requires additional memory to store MMD distance
values. The memory constraint is not unique to our method; CORAL also
encounters this limitation. However, $\Model$ outperforms CORAL in
both training time and memory usage, especially evident on large-scale
datasets like DomainNet. This efficiency gain is noteworthy, given
that CORAL's increased computational requirements arise from its handling
of high-dimensional representation vectors.

In terms of the accuracy, as confirmed by our main text, $\Model$
outperforms CORAL substantially on both PACS and DomainNet, while
maintaining competitiveness on TerraIncognita and VLCS. On OfficeHome,
although $\Model$ lags behind CORAL, we provide an in-depth explanation
for this behavior both in the main text and in the subsequent section.

\subsection{Decreased Performance on OfficeHome}

\begin{figure}[h]
\begin{centering}
\includegraphics[width=0.99\columnwidth]{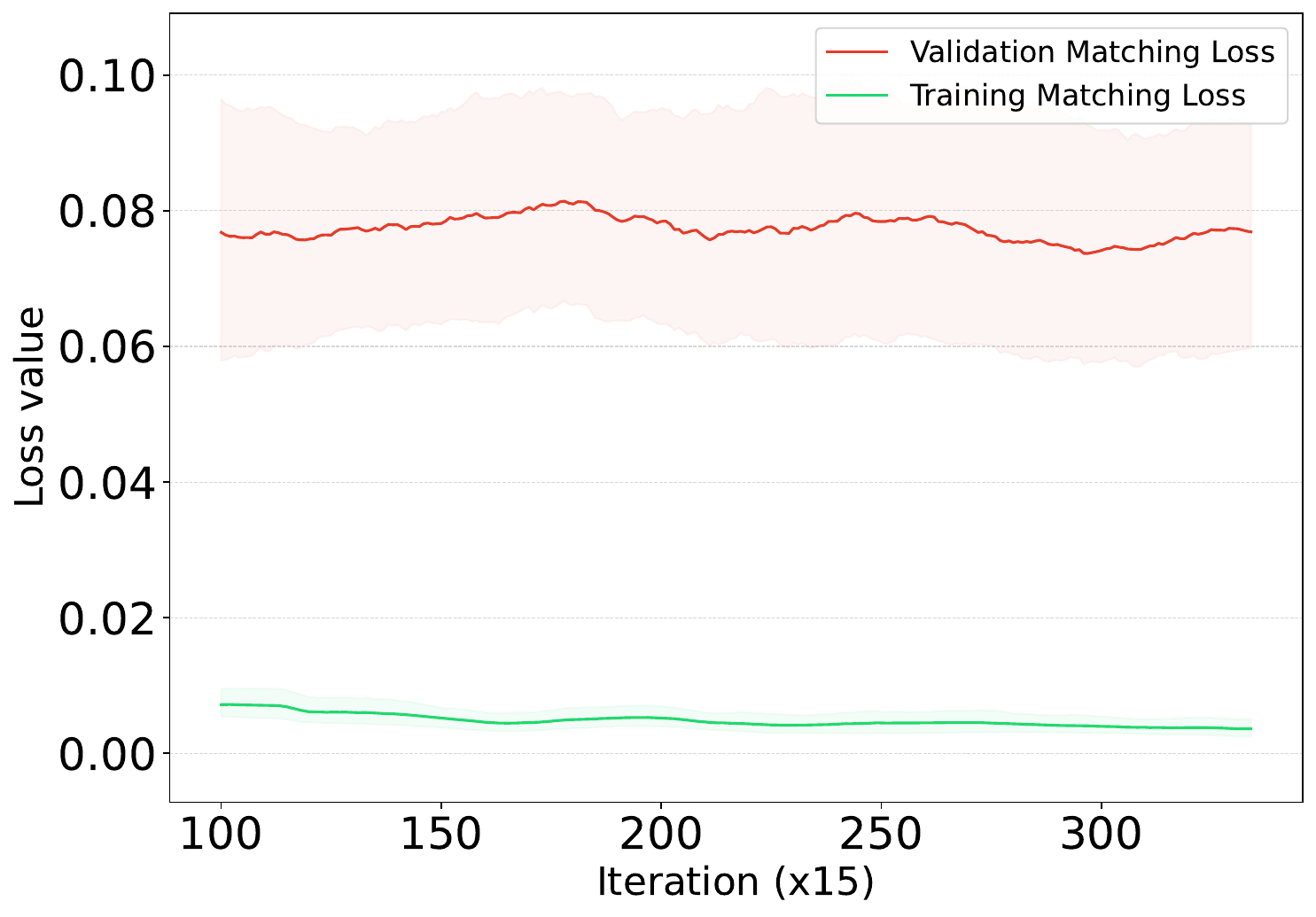}
\par\end{centering}
\caption{The notable gap between training and validation matching loss on the
OfficeHome dataset, excluding the OOD Art domain. Analysis begins
after $\protect\Model$ completes 1,500 pre-training iterations via
ERM. Metrics recorded at every 15-iteration interval.\label{fig:supp-A-large-gap-OfficeHome}}

\end{figure}

In our evaluation, $\Model$ generally surpasses competing matching
methods in OOD settings but faces challenges in specific datasets
like OfficeHome. The dataset's limitations are noteworthy: with an
average of only 240 samples per class, OfficeHome has significantly
fewer instances per class than other datasets, which usually have
at least 1,400. This limited sample size may constrain the model from
learning sufficiently class-semantic features or diverse risk distributions,
leading to overfitting on the training set. To shed light on this
issue, we present a visual analysis in Figure~\ref{fig:supp-A-large-gap-OfficeHome}.
Starting from the 100th iteration, when we perform the task of matching
risk distributions, we note that the training matching loss is already
minimal, forming a clear divergence with the validation matching loss.
While the training loss continues to converge to minimal values, the
validation loss remains inconsistent throughout the training phase.
This inconsistency showcases that the limited diversity in OfficeHome's
risk distributions may induce the model's overfitting on training
samples, reducing its generalisation capabilities. Despite these constraints,
our method still outperforms other well-known baselines, such as MLDG,
VREx, and ERM, on OfficeHome.

\subsection{Impact of batch size and matching coefficient}

\begin{figure*}
\subfloat[VLCS]{\begin{centering}
\includegraphics[width=0.23\textwidth]{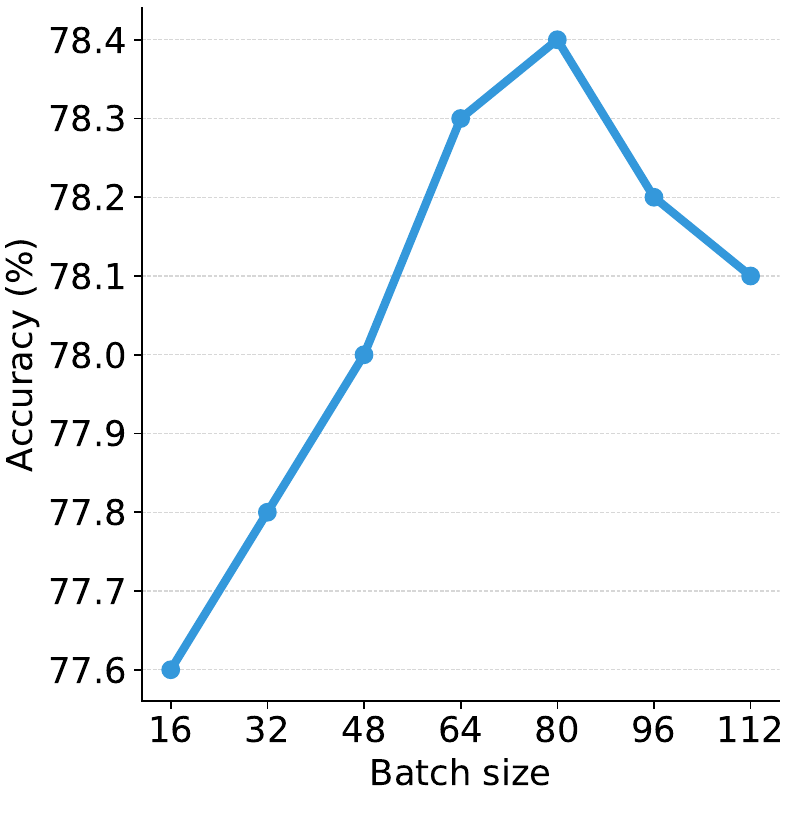}
\par\end{centering}
}\hfill{}\subfloat[OfficeHome]{\begin{centering}
\includegraphics[width=0.23\textwidth]{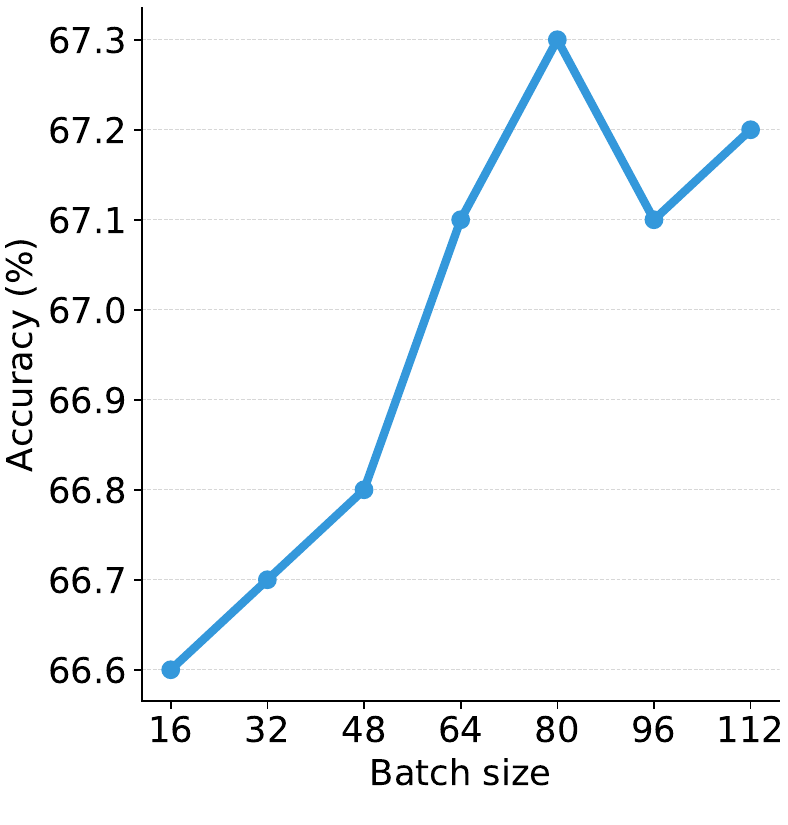}
\par\end{centering}
}\hfill{}\subfloat[TerraIncognita]{\begin{centering}
\includegraphics[width=0.23\textwidth]{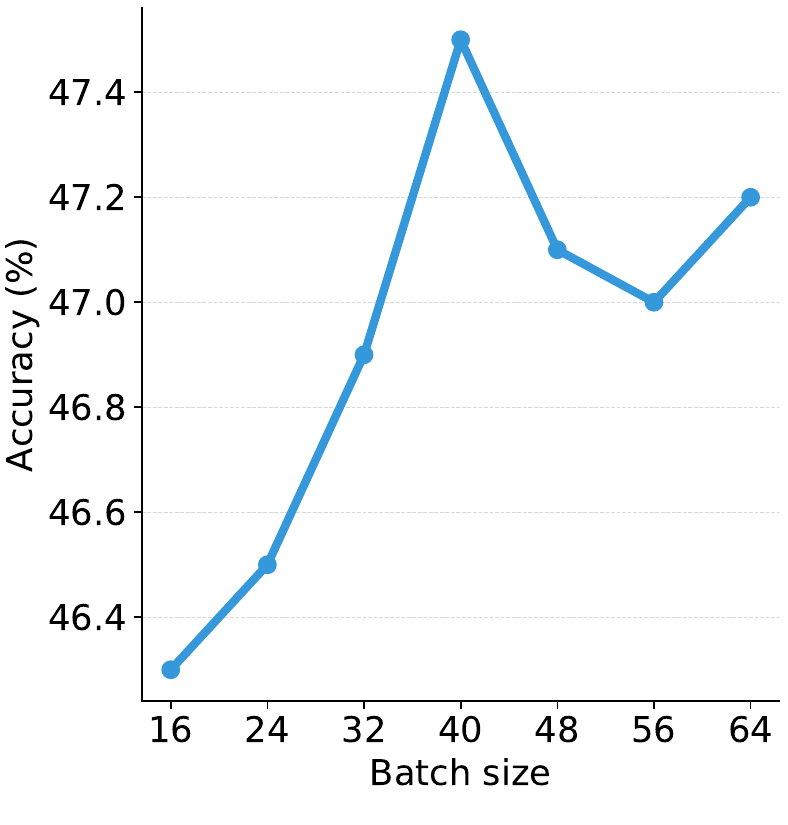}
\par\end{centering}
}\hfill{}\subfloat[DomainNet]{\begin{centering}
\includegraphics[width=0.23\textwidth]{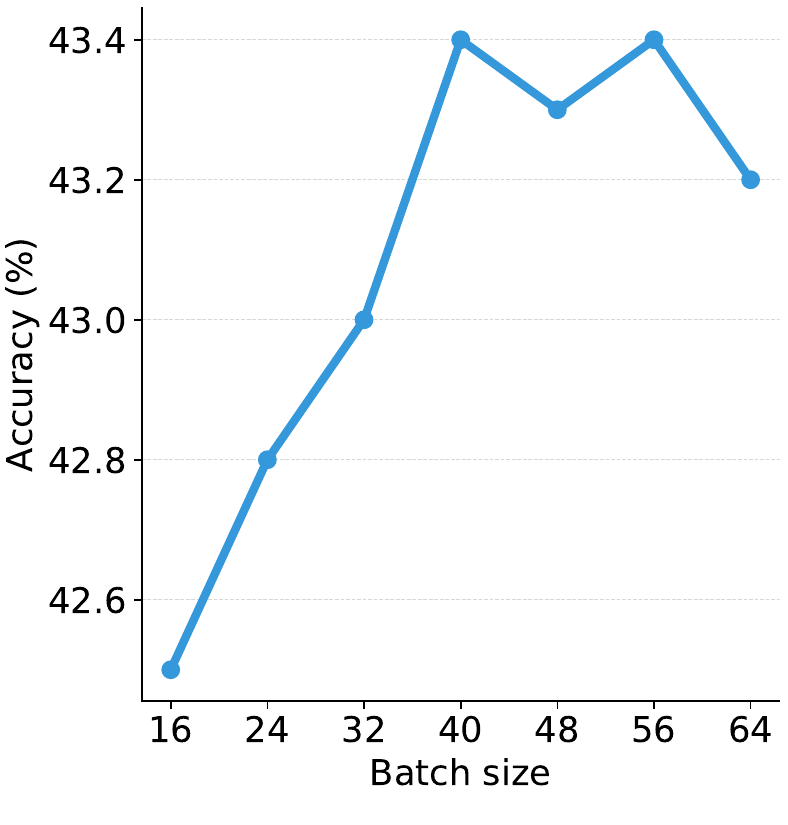}
\par\end{centering}
}

\caption{The influence of batch size in our method on VLCS, OfficeHome, TerraIncognita
and DomainNet.\label{fig:supp-batchsize-other-datasets}}
\end{figure*}

\begin{figure*}
\subfloat[VLCS]{\begin{centering}
\includegraphics[width=0.23\textwidth]{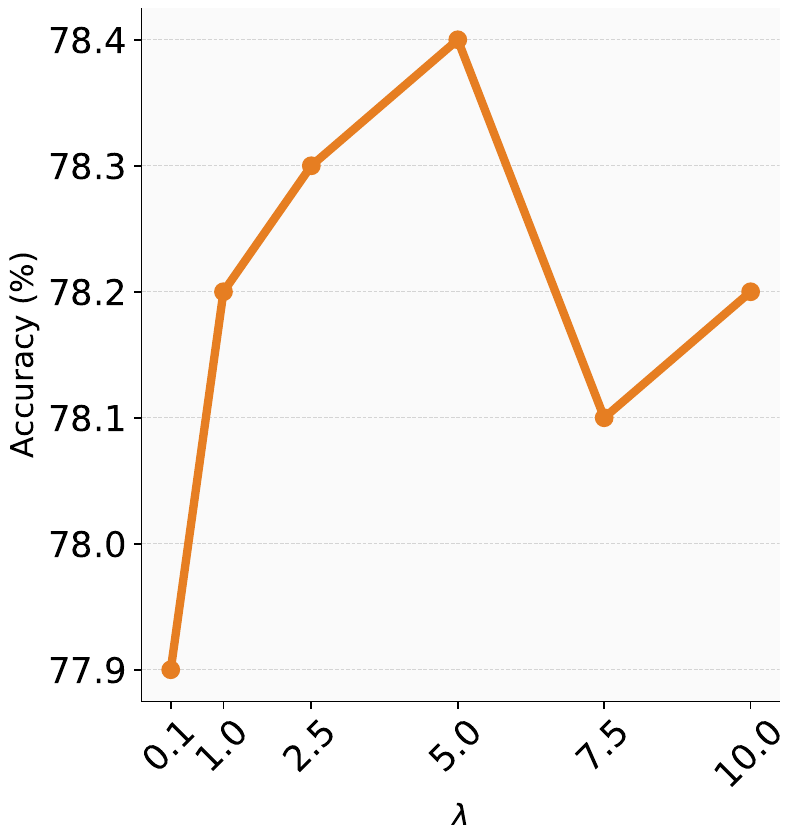}
\par\end{centering}
}\hfill{}\subfloat[OfficeHome]{\begin{centering}
\includegraphics[width=0.23\textwidth]{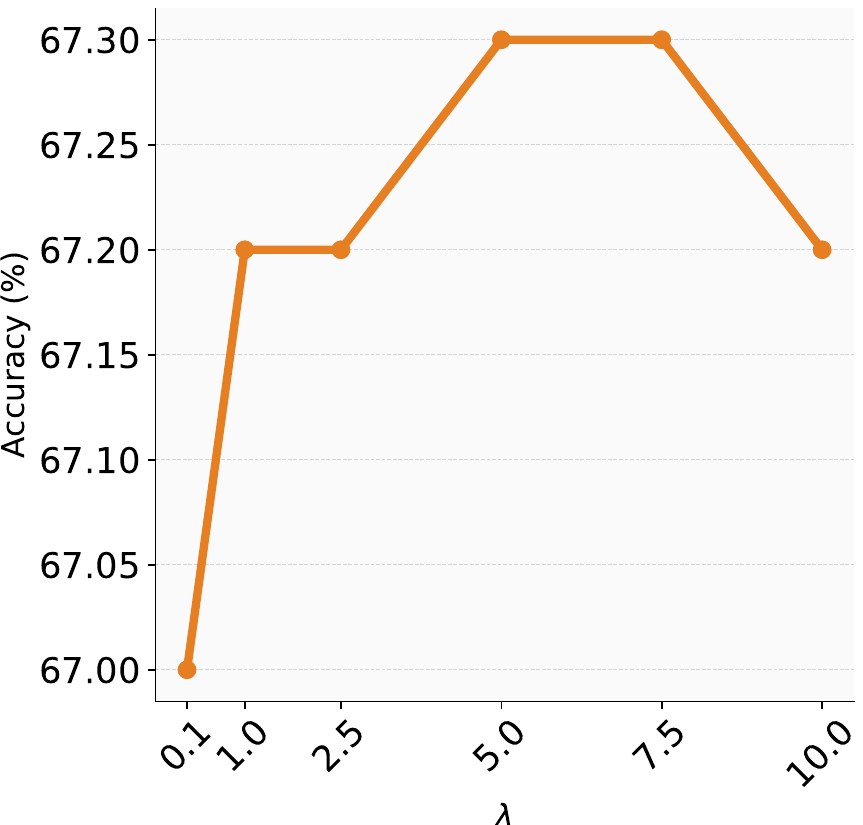}
\par\end{centering}
}\hfill{}\subfloat[TerraIncognita]{\begin{centering}
\includegraphics[width=0.23\textwidth]{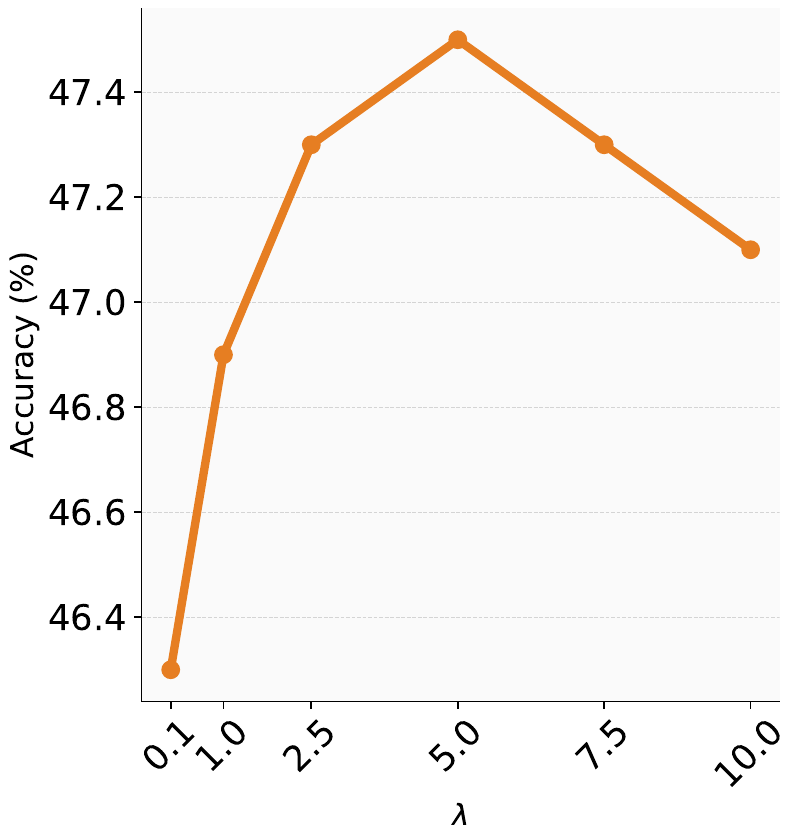}
\par\end{centering}
}\hfill{}\subfloat[DomainNet]{\begin{centering}
\includegraphics[width=0.23\textwidth]{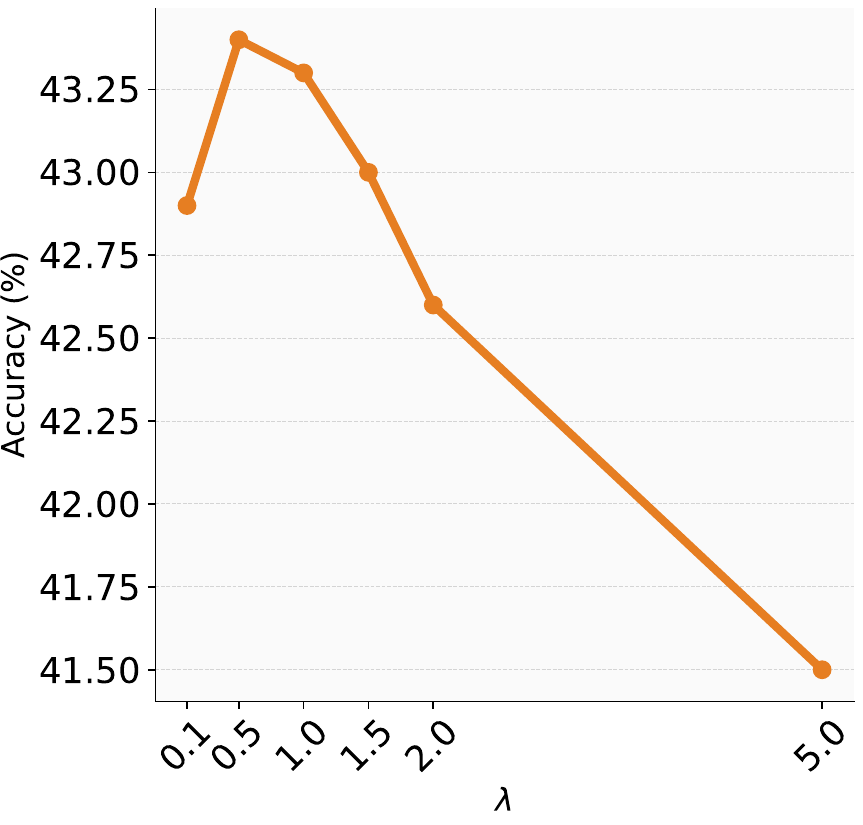}
\par\end{centering}
}

\caption{The influence of matching coefficient $\lambda$ in our method on
VLCS, OfficeHome, TerraIncognita and DomainNet.\label{fig:supp-matching-coefficient-other-datasets}}
\end{figure*}

In our analysis, we closely examine how batch size and the matching
coefficient $\lambda$ affect $\Model$'s performance across four
benchmark datasets: VLCS, OfficeHome, TerraIncognita, and DomainNet.
Consistent with our main text findings on PACS, Figure~\ref{fig:supp-batchsize-other-datasets}
shows that using larger batch sizes enhances the model's generalisation
by facilitating accurate risk distribution matching. Similarly, Figure~\ref{fig:supp-matching-coefficient-other-datasets}
highlights the importance of $\lambda$ in improving OOD performance;
as $\lambda$ increases, OOD performance generally improves. 

We find optimal batch size ranges for each dataset: VLCS and OfficeHome
perform best with sizes between $\left[70,100\right]$, while the
larger datasets of TerraIncognita and DomainNet benefit from a more
limited range of $\left[30,60\right]$. Even with computational limitations,
these batch sizes lead to strong performance. For most datasets, a
$\lambda$ value between $\left[0.1,10.0\right]$ is effective. In
the case of DomainNet, a smaller $\lambda$ range of $\left[0.1,1.0\right]$
works well, balancing the reduction of training risks and the alignment
of risk distributions across domains. This is particularly important
for large-scale datasets where reducing training risks is crucial
for learning predictive features, especially during the initial phases
of training.

\subsection{Risk distributions}

\begin{figure*}
\subfloat[ERM's histogram]{\begin{centering}
\includegraphics[width=0.49\textwidth]{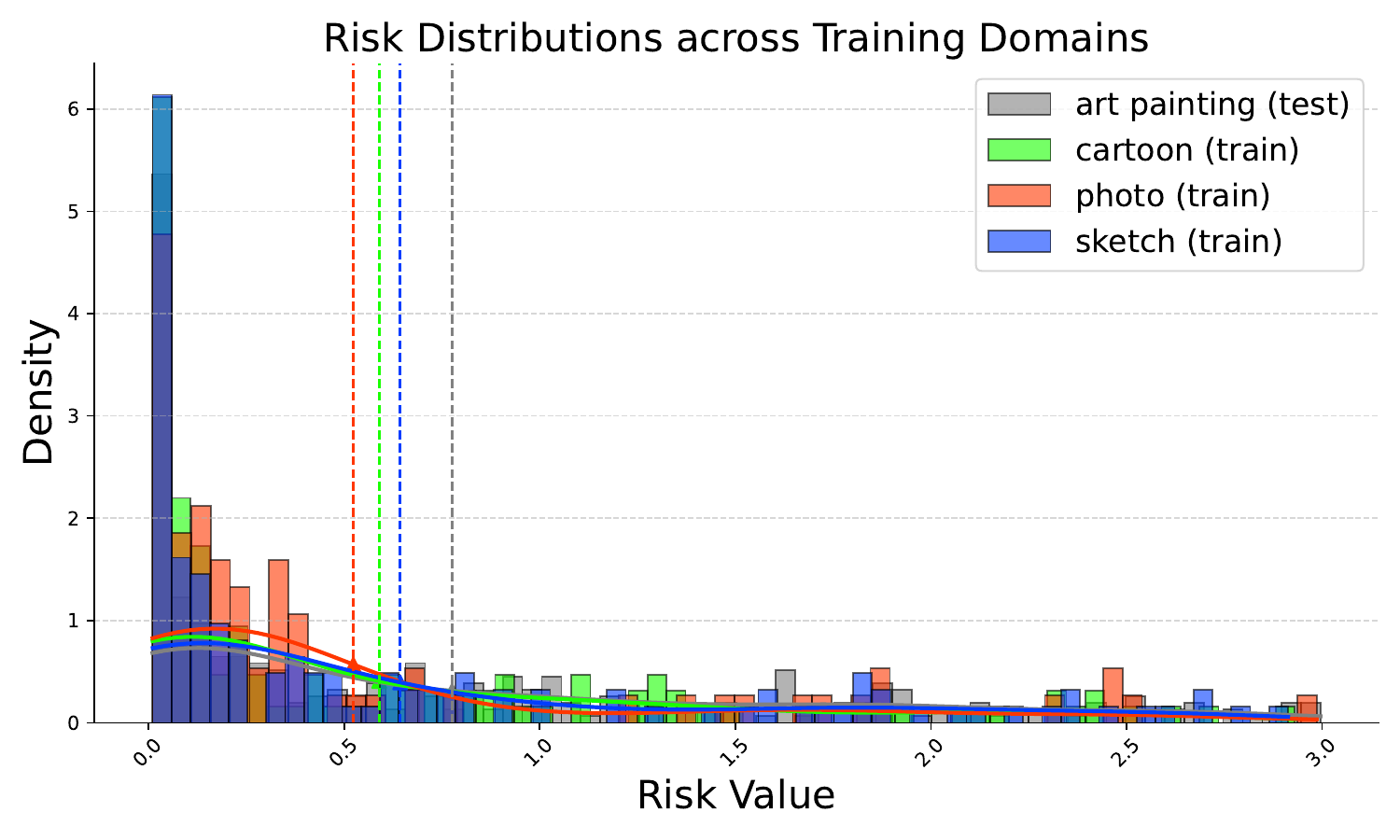}
\par\end{centering}
}\hfill{}\subfloat[$\protect\Model$'s histogram]{\begin{centering}
\includegraphics[width=0.49\textwidth]{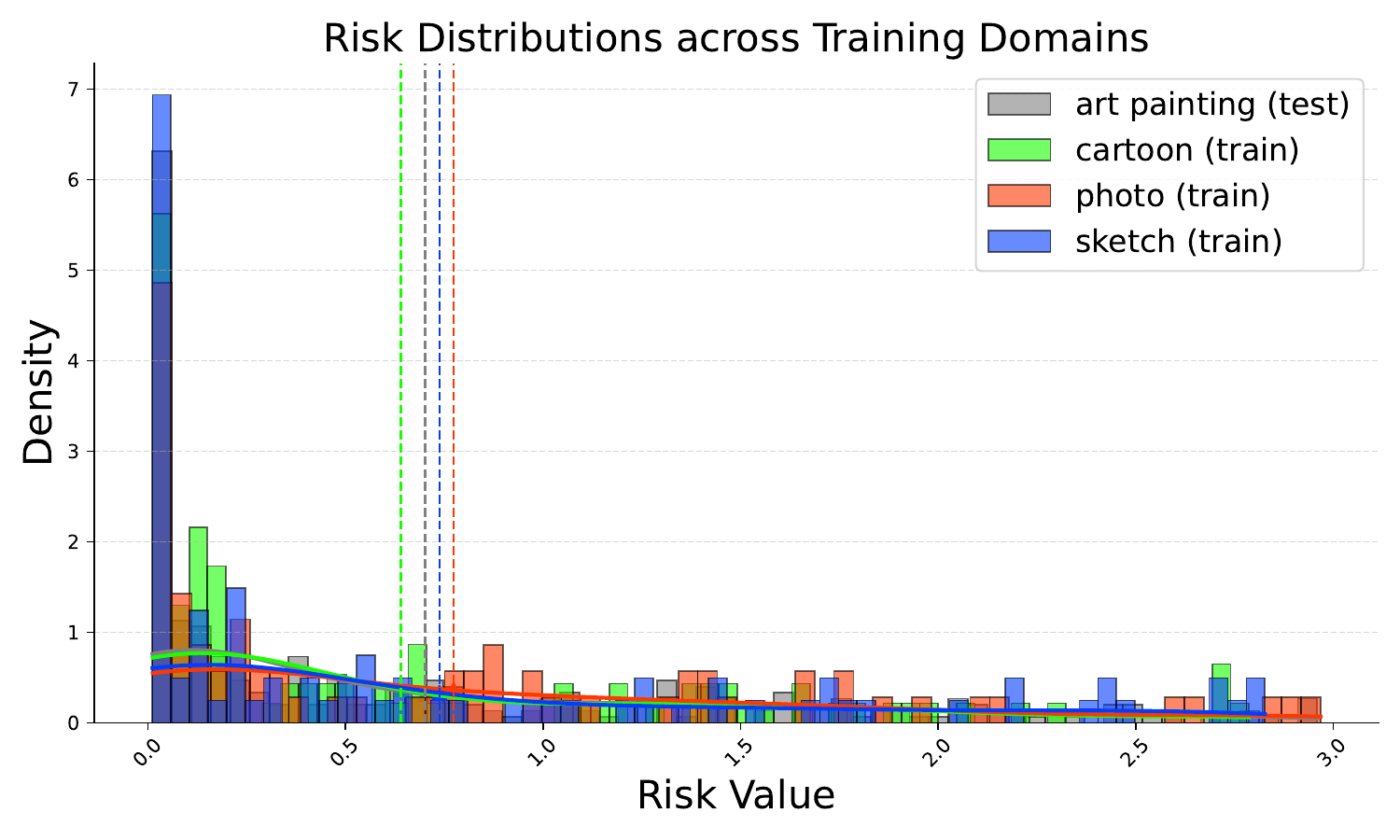}
\par\end{centering}
}

\caption{Histograms with their KDE curves depicting the risk distributions
of $\protect\ModelERM$ and our $\protect\Model$ method across four
domains on PACS. Vertical ticks denote the mean values of all distributions.\label{fig:supp-Histograms-PACS}}
\end{figure*}

\begin{figure*}
\subfloat[ERM's histogram]{\begin{centering}
\includegraphics[width=0.49\textwidth]{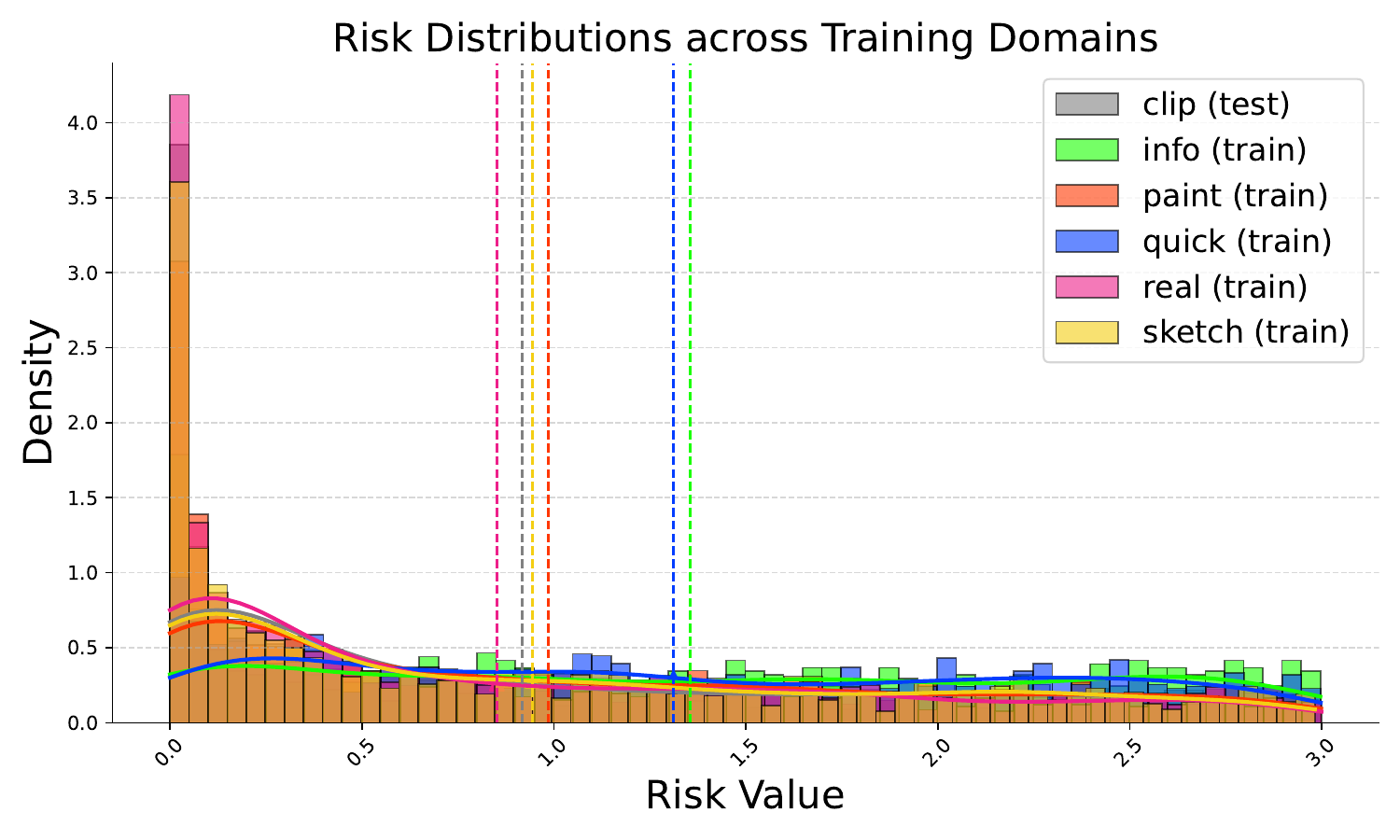}
\par\end{centering}
}\hfill{}\subfloat[$\protect\Model$'s histogram]{\begin{centering}
\includegraphics[width=0.49\textwidth]{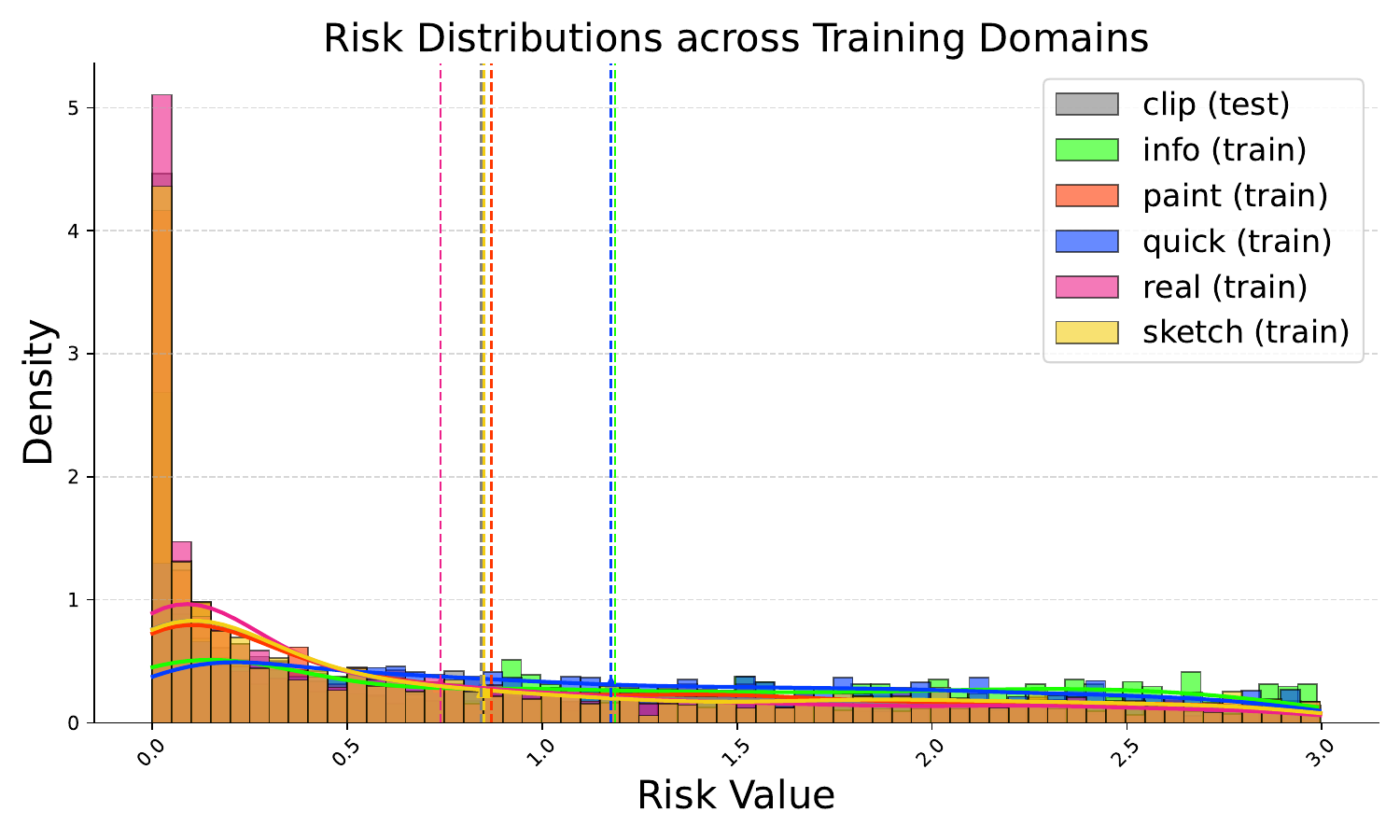}
\par\end{centering}
}

\caption{Histograms with their KDE curves depicting the risk distributions
of $\protect\ModelERM$ and our $\protect\Model$ method across six
domains on DomainNet. Vertical ticks denote the mean values of all
distributions.\label{fig:supp-Histograms-DomainNet}}
\end{figure*}

We present visualisations of risk distribution histograms accompanied
by their KDE curves for two datasets, PACS and DomainNet, in Figures~\ref{fig:supp-Histograms-PACS}
and~\ref{fig:supp-Histograms-DomainNet}, respectively. These visualisations
compare the risk distributions of $\ModelERM$ and our proposed $\Model$
method on the validation sets. Both figures confirm our hypothesis
that variations in training domains lead to distinct risk distributions,
making them valuable indicators of \emph{domain differences}.

On PACS, we observe that $\ModelERM$ tends to capture domain-specific
features, resulting in low risks within the training domains. However,
$\ModelERM$'s substantial deviation of the average risk for the test
domain from that for the training domains suggests sub-optimal OOD
generalisation. In contrast, our $\Model$ approach prioritises stable,
domain-invariant features, yielding more \emph{consistent risk distributions}
and enhanced generalisation. This trend holds across both two datasets,
as our approach consistently aligns risk distributions across domains
better than $\ModelERM$. This alignment effectively narrows the gap
between test and training domains, especially \emph{reducing risks
for test domains}.

These findings underscore the efficacy of our $\Model$ method in
mitigating domain variations by aligning risk distributions, ultimately
leading to enhanced generalisation.

\section{More experimental results\label{sec:supp-More-experimental-results}}

We provide domain-specific out-of-domain accuracies for each dataset
within the DomainBed suite in Tables~\ref{tab:Domain-specific-out-of-domain-acc-DomainNet},~\ref{tab:Domain-specific-out-of-domain-acc-PACS},~\ref{tab:Domain-specific-out-of-domain-acc-VLCS},~\ref{tab:Domain-specific-out-of-domain-acc-OfficeHome},~\ref{tab:Domain-specific-out-of-domain-acc-TerraIncognita}.
In each table, the accuracy listed in each column represents the out-of-domain
performance when that specific domain is excluded from the training
set and used solely for testing within the respective dataset. We
note that the per-domain results for Fish~\cite{shi2022gradient}
are not available.

\begin{table*}
\begin{centering}
\begin{tabular}{lccccccc}
\toprule 
\textbf{Algorithm} & \textbf{clip} & \textbf{info} & \textbf{paint} & \textbf{quick} & \textbf{real} & \textbf{sketch} & \textbf{Avg}\tabularnewline
\midrule 
ERM & 58.1 $\pm$ 0.3 & 18.8 $\pm$ 0.3 & 46.7 $\pm$ 0.3 & 12.2 $\pm$ 0.4 & 59.6 $\pm$ 0.1 & 49.8 $\pm$ 0.4 & 40.9\tabularnewline
Mixup & 55.7 $\pm$ 0.3 & 18.5 $\pm$ 0.5 & 44.3 $\pm$ 0.5 & 12.5 $\pm$ 0.4 & 55.8 $\pm$ 0.3 & 48.2 $\pm$ 0.5 & 39.2\tabularnewline
MLDG & 59.1 $\pm$ 0.2 & 19.1 $\pm$ 0.3 & 45.8 $\pm$ 0.7 & 13.4 $\pm$ 0.3 & 59.6 $\pm$ 0.2 & 50.2 $\pm$ 0.4 & 41.2\tabularnewline
GroupDRO & 47.2 $\pm$ 0.5 & 17.5 $\pm$ 0.4 & 33.8 $\pm$ 0.5 & 9.3 $\pm$ 0.3 & 51.6 $\pm$ 0.4 & 40.1 $\pm$ 0.6 & 33.3\tabularnewline
IRM & 48.5 $\pm$ 2.8 & 15.0 $\pm$ 1.5 & 38.3 $\pm$ 4.3 & 10.9 $\pm$ 0.5 & 48.2 $\pm$ 5.2 & 42.3 $\pm$ 3.1 & 33.9\tabularnewline
VREx & 47.3 $\pm$ 3.5 & 16.0 $\pm$ 1.5 & 35.8 $\pm$ 4.6 & 10.9 $\pm$ 0.3 & 49.6 $\pm$ 4.9 & 42.0 $\pm$ 3.0 & 33.6\tabularnewline
EQRM & 56.1 $\pm$ 1.3 & 19.6 $\pm$ 0.1 & 46.3 $\pm$ 1.5 & 12.9 $\pm$ 0.3 & 61.1 $\pm$ 0.0 & 50.3 $\pm$ 0.1 & 41.0\tabularnewline
Fish & - & - & - & - & - & - & 42.7\tabularnewline
Fishr & 58.2 $\pm$ 0.5 & 20.2 $\pm$ 0.2 & 47.7 $\pm$ 0.3 & 12.7 $\pm$ 0.2 & 60.3 $\pm$ 0.2 & 50.8 $\pm$ 0.1 & 41.7\tabularnewline
CORAL & 59.2 $\pm$ 0.1 & 19.7 $\pm$ 0.2 & 46.6 $\pm$ 0.3 & 13.4 $\pm$ 0.4 & 59.8 $\pm$ 0.2 & 50.1 $\pm$ 0.6 & 41.5\tabularnewline
MMD & 32.1 $\pm$ 13.3 & 11.0 $\pm$ 4.6 & 26.8 $\pm$ 11.3 & 8.7 $\pm$ 2.1 & 32.7 $\pm$ 13.8 & 28.9 $\pm$ 11.9 & 23.4\tabularnewline
\midrule 
RDM (\emph{ours}) & \textbf{62.1 $\pm$ 0.2} & \textbf{20.7 $\pm$ 0.1} & \textbf{49.2 $\pm$ 0.4} & \textbf{14.1 $\pm$ 0.4} & \textbf{63.0 $\pm$ 1.3} & \textbf{51.4 $\pm$ 0.1} & \textbf{43.4}\tabularnewline
\bottomrule
\end{tabular}
\par\end{centering}
\caption{Domain-specific out-of-domain accuracy on DomainNet where the best
results are marked as bold. Results of other methods are referenced
from~\cite{eastwood2022probable,shi2022gradient}.\label{tab:Domain-specific-out-of-domain-acc-DomainNet}}
\end{table*}

\begin{table*}
\begin{centering}
\begin{tabular}{lccccc}
\toprule 
\textbf{Algorithm} & \textbf{A} & \textbf{C} & \textbf{P} & \textbf{S} & \textbf{Avg}\tabularnewline
\midrule 
ERM & 84.7 $\pm$ 0.4 & 80.8 $\pm$ 0.6 & 97.2 $\pm$ 0.3 & 79.3 $\pm$ 1.0 & 85.5\tabularnewline
Mixup & 86.1 $\pm$ 0.5 & 78.9 $\pm$ 0.8 & \textbf{97.6 $\pm$ 0.1} & 75.8 $\pm$ 1.8 & 84.6\tabularnewline
MLDG & 85.5 $\pm$ 1.4 & 80.1 $\pm$ 1.7 & 97.4 $\pm$ 0.3 & 76.6 $\pm$ 1.1 & 84.9\tabularnewline
GroupDRO & 83.5 $\pm$ 0.9 & 79.1 $\pm$ 0.6 & 96.7 $\pm$ 0.3 & 78.3 $\pm$ 2.0 & 84.4\tabularnewline
IRM & 84.8 $\pm$ 1.3 & 76.4 $\pm$ 1.1 & 96.7 $\pm$ 0.6 & 76.1 $\pm$ 1.0 & 83.5\tabularnewline
VREx & 86.0 $\pm$ 1.6 & 79.1 $\pm$ 0.6 & 96.9 $\pm$ 0.5 & 77.7 $\pm$ 1.7 & 84.9\tabularnewline
EQRM & 86.5 $\pm$ 0.4 & \textbf{82.1 $\pm$ 0.7} & 96.6 $\pm$ 0.2 & 80.8 $\pm$ 0.2 & 86.5\tabularnewline
Fish & - & - & - & - & 85.5\tabularnewline
Fishr & \textbf{88.4 $\pm$ 0.2} & 78.7$\pm$ 0.7 & 97.0 $\pm$ 0.1 & 77.8 $\pm$ 2.0 & 85.5\tabularnewline
CORAL & 88.3 $\pm$ 0.2 & 80.0 $\pm$ 0.5 & 97.5 $\pm$ 0.3 & 78.8 $\pm$ 1.3 & 86.2\tabularnewline
MMD & 86.1 $\pm$ 1.4 & 79.4 $\pm$ 0.9 & 96.6 $\pm$ 0.2 & 76.5 $\pm$ 0.5 & 84.6\tabularnewline
\midrule 
RDM (\emph{ours}) & \textbf{88.4 $\pm$ 0.2} & 81.3 $\pm$ 1.6 & 97.1 $\pm$ 0.1 & \textbf{81.8 $\pm$ 1.1} & \textbf{87.2}\tabularnewline
\bottomrule
\end{tabular}
\par\end{centering}
\caption{Domain-specific out-of-domain accuracy on PACS where the best results
are marked as bold. Results of other methods are referenced from~\cite{eastwood2022probable,shi2022gradient}.\label{tab:Domain-specific-out-of-domain-acc-PACS}}
\end{table*}

\begin{table*}
\begin{centering}
\begin{tabular}{lccccc}
\toprule 
\textbf{Algorithm} & \textbf{C} & \textbf{L} & \textbf{S} & \textbf{V} & \textbf{Avg}\tabularnewline
\midrule 
ERM & 97.7 $\pm$ 0.4 & 64.3 $\pm$ 0.9 & 73.4 $\pm$ 0.5 & 74.6 $\pm$ 1.3 & 77.5\tabularnewline
Mixup & 98.3 $\pm$ 0.6 & 64.8 $\pm$ 1.0 & 72.1 $\pm$ 0.5 & 74.3 $\pm$ 0.8 & 77.4\tabularnewline
MLDG & 97.4 $\pm$ 0.2 & 65.2 $\pm$ 0.7 & 71.0 $\pm$ 1.4 & 75.3 $\pm$ 1.0 & 77.2\tabularnewline
GroupDRO & 97.3 $\pm$ 0.3 & 63.4 $\pm$ 0.9 & 69.5 $\pm$ 0.8 & 76.7 $\pm$ 0.7 & 76.7\tabularnewline
IRM & 98.6 $\pm$ 0.1 & 64.9 $\pm$ 0.9 & 73.4 $\pm$ 0.6 & 77.3 $\pm$ 0.9 & 78.5\tabularnewline
VREx & 98.4 $\pm$ 0.3 & 64.4 $\pm$ 1.4 & \textbf{74.1 $\pm$ 0.4} & 76.2 $\pm$ 1.3 & 78.3\tabularnewline
EQRM & 98.3 $\pm$ 0.0 & 63.7 $\pm$ 0.8 & 72.6 $\pm$ 1.0 & 76.7 $\pm$ 1.1 & 77.8\tabularnewline
Fish & - & - & - & - & 77.8\tabularnewline
Fishr & \textbf{98.9 $\pm$ 0.3} & 64.0 $\pm$ 0.5 & 71.5 $\pm$ 0.2 & 76.8 $\pm$ 0.7 & 77.8\tabularnewline
CORAL & 98.3 $\pm$ 0.1 & \textbf{66.1 $\pm$ 1.2} & 73.4 $\pm$ 0.3 & 77.5 $\pm$ 1.2 & \textbf{78.8}\tabularnewline
MMD & 97.7 $\pm$ 0.1 & 64.0 $\pm$ 1.1 & 72.8 $\pm$ 0.2 & 75.3 $\pm$ 3.3 & 77.5\tabularnewline
\midrule 
RDM (\emph{ours}) & 98.1 $\pm$ 0.2 & 64.9 $\pm$ 0.7 & 72.6 $\pm$ 0.5 & \textbf{77.9 $\pm$ 1.2} & 78.4\tabularnewline
\bottomrule
\end{tabular}
\par\end{centering}
\caption{Domain-specific out-of-domain accuracy on VLCS where the best results
are marked as bold. Results of other methods are referenced from~\cite{eastwood2022probable,shi2022gradient}.\label{tab:Domain-specific-out-of-domain-acc-VLCS}}
\end{table*}

\begin{table*}
\begin{centering}
\begin{tabular}{lccccc}
\toprule 
\textbf{Algorithm} & \textbf{A} & \textbf{C} & \textbf{P} & \textbf{R} & \textbf{Avg}\tabularnewline
\midrule 
ERM & 61.3 $\pm$ 0.7 & 52.4 $\pm$ 0.3 & 75.8 $\pm$ 0.1 & 76.6 $\pm$ 0.3 & 66.5\tabularnewline
Mixup & 62.4 $\pm$ 0.8 & 54.8 $\pm$ 0.6 & \textbf{76.9 $\pm$ 0.3} & 78.3 $\pm$ 0.2 & 68.1\tabularnewline
MLDG & 61.5 $\pm$ 0.9 & 53.2 $\pm$ 0.6 & 75.0 $\pm$ 1.2 & 77.5 $\pm$ 0.4 & 66.8\tabularnewline
GroupDRO & 60.4 $\pm$ 0.7 & 52.7 $\pm$ 1.0 & 75.0 $\pm$ 0.7 & 76.0 $\pm$ 0.7 & 66.0\tabularnewline
IRM & 58.9 $\pm$ 2.3 & 52.2 $\pm$ 1.6 & 72.1 $\pm$ 2.9 & 74.0 $\pm$ 2.5 & 64.3\tabularnewline
VREx & 60.7 $\pm$ 0.9 & 53.0 $\pm$ 0.9 & 75.3 $\pm$ 0.1 & 76.6 $\pm$ 0.5 & 66.4\tabularnewline
EQRM & 60.5 $\pm$ 0.1 & \textbf{56.0 $\pm$ 0.2} & 76.1 $\pm$ 0.4 & 77.4 $\pm$ 0.3 & 67.5\tabularnewline
Fish & - & - & - & - & 68.6\tabularnewline
Fishr & 62.4 $\pm$ 0.5 & 54.4 $\pm$ 0.4 & 76.2 $\pm$ 0.5 & 78.3 $\pm$ 0.1 & 67.8\tabularnewline
CORAL & \textbf{65.3 $\pm$ 0.4} & 54.4 $\pm$ 0.5 & 76.5 $\pm$ 0.1 & \textbf{78.4 $\pm$ 0.5} & \textbf{68.7}\tabularnewline
MMD & 60.4 $\pm$ 0.2 & 53.3 $\pm$ 0.3 & 74.3 $\pm$ 0.1 & 77.4 $\pm$ 0.6 & 66.3\tabularnewline
\midrule 
RDM (\emph{ours}) & 61.1 $\pm$ 0.4 & 55.1 $\pm$ 0.3 & 75.7 $\pm$ 0.5 & 77.3 $\pm$ 0.3 & 67.3\tabularnewline
\bottomrule
\end{tabular}
\par\end{centering}
\caption{Domain-specific out-of-domain accuracy on OfficeHome where the best
results are marked as bold. Results of other methods are referenced
from~\cite{eastwood2022probable,shi2022gradient}.\label{tab:Domain-specific-out-of-domain-acc-OfficeHome}}
\end{table*}

\begin{table*}[t]
\begin{centering}
\begin{tabular}{lccccc}
\toprule 
\textbf{Algorithm} & \textbf{L100} & \textbf{L38} & \textbf{L43} & \textbf{L46} & \textbf{Avg}\tabularnewline
\midrule 
ERM & 49.8 $\pm$ 4.4 & 42.1 $\pm$ 1.4 & 56.9 $\pm$ 1.8 & 35.7 $\pm$ 3.9 & 46.1\tabularnewline
Mixup & \textbf{59.6 $\pm$ 2.0} & 42.2 $\pm$ 1.4 & 55.9 $\pm$ 0.8 & 33.9 $\pm$ 1.4 & \textbf{47.9}\tabularnewline
MLDG & 54.2 $\pm$ 3.0 & 44.3 $\pm$ 1.1 & 55.6 $\pm$ 0.3 & 36.9 $\pm$ 2.2 & 47.7\tabularnewline
GroupDRO & 41.2 $\pm$ 0.7 & 38.6 $\pm$ 2.1 & 56.7 $\pm$ 0.9 & 36.4 $\pm$ 2.1 & 43.2\tabularnewline
IRM & 54.6 $\pm$ 1.3 & 39.8 $\pm$ 1.9 & 56.2 $\pm$ 1.8 & 39.6 $\pm$ 0.8 & 47.6\tabularnewline
VREx & 48.2 $\pm$ 4.3 & 41.7 $\pm$ 1.3 & 56.8 $\pm$ 0.8 & 38.7 $\pm$ 3.1 & 46.4\tabularnewline
EQRM & 47.9 $\pm$ 1.9 & \textbf{45.2 $\pm$ 0.3} & \textbf{59.1 $\pm$ 0.3} & 38.8 $\pm$ 0.6 & 47.8\tabularnewline
Fish & - & - & - & - & 45.1\tabularnewline
Fishr & 50.2 $\pm$ 3.9 & 43.9 $\pm$ 0.8 & 55.7 $\pm$ 2.2 & \textbf{39.8 $\pm$ 1.0} & 47.4\tabularnewline
CORAL & 51.6 $\pm$ 2.4 & 42.2 $\pm$ 1.0 & 57.0 $\pm$ 1.0 & \textbf{39.8 $\pm$ 2.9} & 47.6\tabularnewline
MMD & 41.9 $\pm$ 3.0 & 34.8 $\pm$ 1.0 & 57.0 $\pm$ 1.9 & 35.2 $\pm$ 1.8 & 42.2\tabularnewline
\midrule 
RDM (\emph{ours}) & 52.9 $\pm$ 1.2 & 43.1 $\pm$ 1.0 & 58.1 $\pm$ 1.3 & 36.1 $\pm$ 2.9 & 47.5\tabularnewline
\bottomrule
\end{tabular}
\par\end{centering}
\caption{Domain-specific out-of-domain accuracy on TerraIncognita where the
best results are marked as bold. Results of other methods are referenced
from~\cite{eastwood2022probable,shi2022gradient}.\label{tab:Domain-specific-out-of-domain-acc-TerraIncognita}}
\end{table*}

\end{document}